\title[Exploration in model-based RL]{\huge Exploration in Model-based Reinforcement Learning with Randomized Reward}
\let\Ginclude@graphics\@org@Ginclude@graphics
\def\given{\,|\,}
\def\biggiven{\,\big{|}\,}
\def\tr{\mathop{\text{tr}}\kern.2ex}
\def\good{\mathop{\text{good}}}
\long\def\comment#1{}
\def\tr{\mathop{\text{Tr}}}
\def\cS{{\mathcal{S}}}
\newcommand{\bel}{\begin{eqnarray}\label}
\newcommand{\eel}{\end{eqnarray}}
\newcommand{\bes}{\begin{eqnarray*}}
\newcommand{\ees}{\end{eqnarray*}}
\newcommand{\eps}{\epsilon}
\def\algoname{\texttt{PlanEx}~}
\def\algonospace{\texttt{PlanEx}~}
\def\##1\#{\begin{align}#1\end{align}}
\def\$#1\${\begin{align*}#1\end{align*}}
\author{\vspace{0.5in}\\
. \hspace{2.1in} Lingxiao Wang, \  Ping Li\\\\
. \hspace{2in} Cognitive Computing Lab\\
. \hspace{2.5in} Baidu Research\\
. \hspace{1.6in} 10900 NE 8th St, Bellevue, WA 98004, USA\\
}
\begin{document}

\maketitle

\begin{abstract}\vspace{0.2in}

\noindent\footnote{This manuscript was  completed in September 2021 while both authors worked at Baidu Cognitive Computing Lab.   }Model-based Reinforcement Learning (MBRL) has been widely adapted due to its sample efficiency. However, existing worst-case regret analysis typically requires optimistic planning, which is not realistic in general. In contrast, motivated by the theory, empirical study utilizes ensemble of models, which achieve state-of-the-art performance on various testing environments. Such deviation between theory and empirical study leads us to question whether randomized model ensemble guarantee optimism, and hence the optimal worst-case regret? This paper partially answers such question from the perspective of reward randomization, a scarcely explored direction of exploration with MBRL. We show that under the kernelized linear regulator (KNR) model, reward randomization guarantees a partial optimism, which further yields a  near-optimal worst-case regret in terms of the number of interactions. We further extend our theory to generalized function approximation and identified conditions for reward randomization to attain provably efficient exploration.~Correspondingly, we propose concrete examples of efficient reward randomization. To the best of our knowledge, our analysis establishes the first worst-case regret analysis on randomized MBRL with function approximation.

\end{abstract}

\newpage

\section{Introduction}

Reinforcement learning (RL)~\citep{sutton2018reinforcement} aims to learn the optimal policy by iteratively interacting with the environment. Model-based reinforcement learning (MBRL)~\citep{osband2014model, luo2019algorithmic, ha2018world, luo2019algorithmic, sun2019model, kaiser2020model, ayoub2020model, kakade2020information} achieves such a goal by fitting the environment from the observation and obtaining the policy from the fitted environment. Incorporated with deep learning, MBRL has achieved tremendous success in real-world tasks, including video games~\citep{ha2018world, kaiser2020model} and control tasks~\citep{watter2015embed, williams2015model, chua2018deep, hafner2019dream, song2021pc}.

A key factor to the success of MBRL is sample efficiency. In terms of the theoretical analysis, such sample efficiency is characterized by the regret analysis of MBRL. Previous analysis suggests that when incorporated with exploration strategies, MBRL enjoys a near-optimal $\tilde\cO(\sqrt{T})$ regret~\citep{jaksch2010near, ayoub2020model, kakade2020information}, where $T$ is the total number of interactions with the environment. However, previous provably efficient exploration typically utilizes optimistic planning~\citep{jaksch2010near,  luo2019algorithmic, ayoub2020model, kakade2020information}. Such exploration strategy requires (i) identifying a confidence set of models $\cD$, which captures the uncertainty in model estimation, and then (ii) conducting optimistic planning by searching for the maximal policy among all possible models within $\cD$. The key to the success of optimistic planning is \textit{optimism under the face of the uncertainty} principle~\citep{jaksch2010near}. Intuitively, optimistic planning encourages the agent to explore less visited areas, hence enhancing the sample complexity of the corresponding RL algorithm. While step (i) is realizable with ensemble techniques, step (ii) is in general impossible to implement, as it requires solving an optimization problem over a possibly continuous space of models $\cD$. As an alternative, previous empirical study~\citep{chua2018deep, pathak2017curiosity, pathak2019self} typically borrows the idea from optimistic planning and the study of Thompson sampling (TS) based algorithm~\citep{osband2014model}. A common empirical approach is to utilize model ensembles to capture the uncertainty of model estimations. Such ensembles are further utilized in planning through TS~\citep{chua2018deep} or bonus construction~\citep{pathak2017curiosity, pathak2019self}. Unlike optimistic planning, such approaches typically do not have a worst-case regret guarantee. Nevertheless, they attain state-of-the-art performance in various testing environments. Such deviation from theory and practice motivates us to propose this question:\\

\noindent\textit{Does randomized model ensemble guarantees optimism, and hence the optimal worst-case regret? }\\

In this paper, we provide a partial solution to the above question from reward randomization, a relatively less studied method for exploration in MBRL. We initiate our analysis under the \textit{kernelized linear regulator} (KNR) transition model~\citep{mania2022active, kakade2020information, song2021pc} and known reward functions. We propose \algonospace, which conducts exploration~by iteratively planning with the fitted transition model and a randomized reward function. We further show that \algoname attains the near-optimal $\tilde\cO(\sqrt{T})$ regret. A key observation of reward randomization~is a notion of \textit{partial optimism}~\citep{russo2019worst, zanette2020frequentist}, which ensures that a sufficient amount of interactions are devoted to exploration under the optimism principle. Motivated by the analysis under the KNR transition model, we extend \algoname to general function approximation with calibrated model~\citep{curi2020efficient, kidambi2021mobile} and propose a generic design principle of reward randomization. We further propose concrete examples of valid reward randomization and demonstrate the effectiveness of reward randomization  theoretically. We highlight that the proposed reward randomization method can be easily implemented based on the model ensembles. In addition, the reward randomization is highly modular and can be incorporated with various SOTA baselines.

\vspace{0.1in}
\noindent{\bf Contribution.} Our work provides a partial solution to the question we raised. Specifically, we investigate reward randomization and propose \algonospace. Our contributions are as follows.
\begin{itemize}
\item We propose \algonospace, a novel exploration algorithm for MBPO with worst-case regret guarantee that is realizable with general function parameterizations.
\item We show that \algoname~has near-optimal worst-case regret under the KNR dynamics.
\end{itemize}
To the best of our knowledge, our analysis establishes the first worst-case regret analysis on randomized MBRL with function approximation.

\vspace{0.1in}
\noindent{\bf Related Work.}
Our work is closely related to the regret analysis of MBRL and online control problem~\citep{osband2014model, luo2019algorithmic, sun2019model, lu2019information, ayoub2020model, kakade2020information, curi2020efficient, agarwal2020flambe, song2021pc}.~\cite{ayoub2020model} propose the value-target regression (VTR) algorithm, which focuses on the aspects of the transition model that are relevant to RL.~\cite{agarwal2020flambe} propose FLAMBE, a provably efficient MBRL algorithm under the linear MDP setting~\citep{jin2020provably, yang2019sample}.~\cite{kakade2020information} propose LC3, an online control algorithm under the KNR dynamics~\citep{mania2022active} that attains the optimal worst-case regret. Both~\cite{ayoub2020model} and~\cite{kakade2020information} utilizes optimistic planning for exploration, which is in general intractable. In contrast, we utilize reward randomization for exploration, which also attains the optimal worst-case regret and is highly tractable. To attain tractable optimistic planning,~\cite{curi2020efficient} design HUCRL, which introduces an extra state deviation variable in the optimization for planning. In contrast, planning with randomized reward does not introduce extra variable in optimization.~\cite{luo2019algorithmic} optimizes a lower bound of value functions, which avoids explicit uncertain quantification. Recent works also utilizes reward bonus to attain optimistic planning~\citep{kidambi2021mobile, song2021pc}.~\cite{song2021pc} propose PC-MLP, which constructs bonus by estimating the policy cover~\citep{agarwal2020pc} and is computationally tractable. As a comparison, PC-MLP requires extra sampling to estimate the covariate matrix of policy cover. As a consequence, PC-MLP does not attain the optimal $\tilde\cO(\sqrt{T})$-regret. In contrast, \algoname does not require extra sampling to construct the bonus and can achieve the $\tilde\cO(\sqrt{T})$-regret. In addition, to attain tractable realization, PC-MLP utilizes different feature in model fitting and bonus construction, which is inconsistent with the theoretical analysis. In contrast, the implementation of \algoname is consistent with the theoretical analysis under the calibrated model assumption. Previous work also study efficient model-free RL exploration algorithms with function approximation. See, e.g.,~\cite{jiang2017contextual, jin2020provably, du2020good, wang2020reward, cai2020provably, agarwal2020pc, modi2021model} and references therein for this line of research.

Our analysis is inspired by the recent progress in worst-case regret analysis of randomized RL algorithms~\citep{russo2019worst, pacchiano2020optimism,zanette2020frequentist, ishfaq2021randomized}. Our optimism analysis is inspired by~\cite{russo2019worst} and~\cite{zanette2020frequentist}.~\cite{russo2019worst} propose the first worst-case regret analysis to the randomized least-squares value iteration (RLSVI) algorithm under the tabular setting.~\cite{zanette2020frequentist} extend the analysis of RLSVI to truncated linear function approximations under the general state space.~\cite{ishfaq2021randomized} analyze the randomized Q-learning with both linear function approximation and general function approximation. In contrast, we focus on the randomized MBRL algorithm.~\cite{pacchiano2020optimism} analyze the worst-case regret of MBRL with both reward and transition randomization. We remark that both~\cite{pacchiano2020optimism} and~\cite{ishfaq2021randomized} require drawing multiple samples for each state-action pair in planning and further maximizing over all randomized reward functions in planning. In contrast, we only require one sample at each time step, and do not need further maximization. In addition,~\cite{pacchiano2020optimism} focus on the tabular setting with finite state and action spaces, whereas we consider the generic setting with function approximation.


\section{Background}

\subsection{Reinforcement Learning}
In this paper, we model the environment by an episodic MDP $(\cS, \cA, H, \{r_h\}_{h\in[H]}, \PP)$. Here $\cS$ and $\cA$ are the state spaces, $H$ is the length of episodes, $r_h: \cS\times\cA\mapsto [0, 1]$ is the bounded reward function for $h\in[H]$, and $\PP$ is the transition kernel, which defines the transition probability $s_{h+1}\sim\PP(\cdot\given s_h, a_h)$ for all $h\in[H]$ and $(s_h, a_h)\in\cS\times\cA$.
\vspace{0.1in}
\noindent{\bf Interaction Procedure.} An agent with a set of policies $\{\pi_h\}_{h\in[H]}$ interacts with such environment as follows. The agent starts from a fixed initial state $s_1\in\cS$. Iteratively, upon reaching the state $s_h\in\cS$, the agent takes the action $a_h = \pi_h(s_h)$. The agent then receives the reward $r_h(s_h, a_h)$. The environment transits into the next state $s_{h+1}$ according to the probability $\PP(\cdot\given s_h, a_h)$. The process ends when the agent reaches the state $s_{H+1}$.

To describe the expected cumulative reward, for each policy $\pi = \{\pi_h\}_{h\in[H]}$, we introduce the action-value functions $\{Q^\pi_h\}_{h\in[H]}$ defined as follows,
\#\label{eq::def_Q}
Q^\pi_h(s_h, a_h; \{r_h\}_{h\in[H]}, \PP) = \sum^H_{\tau = h}\EE\bigl[ r_\tau(s_\tau, a_\tau) \biggiven s_h, a_h, \pi\bigr],\quad \forall h\in[H], ~(s_h, a_h)\in\cS\times\cA,
\#
where $a_\tau = \pi_\tau(s_\tau)$ and $s_{\tau+1}\sim \PP(\cdot \given s_\tau, a_\tau)$ for all $\tau = h, \ldots, H$. Similarly, we define the value functions $\{V^\pi_h\}_{h\in[H]}$ as follows,
\#\label{eq::def_V}
V^\pi_h(s_h; \{r_h\}_{h\in[H]}, \PP) = \sum^H_{\tau = h}\EE\bigl[ r_\tau(s_\tau, a_\tau) \biggiven s_h,  \pi\bigr],\quad \forall h\in[H], ~(s_h, a_h)\in\cS\times\cA.
\#
We define optimal policy $\pi^* = \{\pi^*_h\}_{h\in[H]}$ as the maximizer of the following optimization problem,
\#\label{def::pi}
\pi^* = \argmax_{\pi} V^\pi_1(s_1; \{r_h\}_{h\in[H]}, \PP).
\#
Correspondingly, we define $V^*$ and $Q^*$ the value and action-value functions corresponding to the optimal policy $\pi^*$.

The goal of reinforcement learning (RL) is to sequentially select the policy $\pi^k = \{\pi^k_h\}_{h\in[H]}$ based on the previous experiences, aiming to maximize the expected cumulative reward collected by the agent in the interaction process. Equivalently, the goal is to minimize the following regret,
\#\label{eq::def_regret}
R(K) = \sum^K_{k = 1} V^*(s_1) - V^{\pi^k}(s_1),
\#
where $K$ is the total number of interactions and $s_1$ is the fixed initial state. Intuitively, the regret $R(K)$ describes the deviation between the policies executed in the interaction process and the optimal policy.


\subsection{The Online Nonlinear Control Problem}
We consider the online nonlinear control problem with the following transition dynamics,
\$
s_{h+1} = f(s_h, a_h) + \epsilon, \quad \text{where}~\epsilon \sim N(0, \sigma^2\cdot I),\quad \forall h\in[H], ~(s_h, a_h)\in\cS\times\cA.
\$
Here the function $f: \cS\times\cA\mapsto \cS$ belongs to a Reproducing Kernel Hilbert Space (RKHS) with known kernel and the noise $\epsilon$ is independent across transitions. Such transition is also known as the Kernelized Nonlinear Regulator (KNR) in previous study~\citep{kakade2020information, song2021pc}. In this work, we follow~\cite{mania2022active, kakade2020information, song2021pc} and consider a primal version of such transition dynamics as the underlying transition dynamics for the RL problem, which is defined as follows,
\#\label{eq::def_KNR}
&s_{h+1} = f(s_h, a_h;W^*) + \epsilon, \quad\text{where}~\epsilon \sim N(0, \sigma^2\cdot I), \notag\\
&f(s_h, a_h; W^*) = W^* \phi(s_h, a_h), \quad \forall h\in[H], ~(s_h, a_h)\in\cS\times\cA.
\#
Here $\phi: \cS\times\cA \mapsto \RR^{d_\phi}$ is a \textit{known} feature embedding. Meanwhile, the state space $\cS\subseteq \RR^{d_\cS}$ is a subset of the Euclidean space with dimension $d_\cS$ and $W^* \in\RR^{d_{\cS}\times d_{\phi}}$ is the \textit{unknown} true parameter of the KNR transition dynamics.

Correspondingly, in the sequel, we denote by $Q^\pi(\cdot, \cdot; \{r_h\}_{h\in[H]}, W)$ and $V^\pi(\cdot; \{r_{h}\}_{h\in[H]}, W)$ the value functions of the policy $\pi$ under the reward functions $\{r_h\}_{h\in[H]}$ and the transition dynamics defined by the matrix $W\in\RR^{d_{\cS}\times d_\phi}$. For the simplicity of our analysis, we fix the following scaling of features and parameters.
\begin{assumption}[Normalized Model]
\label{asu::scal_param}
We assume that $\|\phi(s, a)\|_2 \leq 1/\sqrt{H}$ for all $(s, a)\in\cS\times\cA$. correspondingly, we assume that $\|W^*\|_2 =\cO( \sqrt{H})$, where $W^*$ is the true parameter of the KNR transition dynamics defined in \eqref{eq::def_KNR}.
\end{assumption}
Similar normalization assumptions also arises in~\cite{mania2022active}. We remark that the scaling assumptions in Assumption \ref{asu::scal_param} only affect the rate of $H$ in regret, and is imposed for the simplicity of our analysis.

\subsection{Model-based RL for Unknown Transition Dynamics}
In model-based RL, the agent optimizes the policy by iteratively fitting the transition dynamics based on the data collected, and conducting optimal planning on the fitted transition dynamics. For each iteration $k$, the model-based RL consists of the following steps.
\begin{itemize}
\item {\bf (i) Model Fitting.} In this step, the agent updates the parameter $W_k$ of transition dynamics based on the replay buffer $\cD_k$. 
\item {\bf (ii) Planning.} In this step, the agent conducts optimal planning based on the fitted parameter $W_k$ of transition dynamics. By planning with the fitted models, the agent updates the policy $\pi^k$.
\item {\bf (iii) Interaction.} In this step, the agent interacts with the environment with the policy $\pi^k$ and collects a trajectory $\iota_k = (s^k_1, a^k_1, \ldots, s^k_H, a^k_H, s^k_{H+1})$. The agent then updates the replay buffer by $\cD_{k+1} = \cD_k \cup \iota_k$.
\end{itemize}
In the sequel, we raise the following assumption, which assume that we have access to a planning oracle to handle the planning in step (ii).
\begin{assumption}[Planning Oracle]
\label{asu::oracle}
We assume that we have access to the oracle $\text{Plan}(\cdot, \cdot, \cdot)$, which returns the optimal policy $\pi = \text{Plan}(s_1, \{r_h\}_{h\in[H]}, W)$ for any input reward functions $\{r_h\}_{h\in[H]}$ and the parameter $W$ of the transition dynamics.
\end{assumption}

\begin{remark}[Remark on Sample Complexity]
In practice, the planning on fitted environment is typically handled by deep RL algorithms~\citep{pathak2017curiosity, luo2019algorithmic, pathak2019self, song2021pc} or model predictive control~\citep{williams2015model, chua2018deep, kakade2020information}. We remark that since such planning is conducted on the fitted environment, solving such planning problem does not raise concerns in the sample complexity of solvers. In contrast, such sample complexity concern is raised when interacting with the real environment in step (iii). We remark that the goal of exploration is to obtain a near-optimal policy with as few round of interactions $K$ as possible. When measured with the regret $R(K)$ defined in \eqref{eq::def_regret}, the goal of exploration is to design algorithms to attain an regret $R(K)$ that grows as slow as possible in terms of $K$.
\end{remark}

\section{Exploration with Randomized Reward}
In this section, we propose \algoname, an provably efficient and realizable algorithm for the RL problem with KNR dynamics. In the sequel, we describe the procedure of each step in the $k$-th iteration of \algoname.

\vspace{0.1in}
\noindent{\bf (i) Model Fitting.} Given the dataset $\cD_k = \{(s^\tau_h, a^\tau_h, s^\tau_{h+1})_{(h, \tau)\in[H]\times[k-1]}\}$, we fit the transition parameter $W^k$ by minimizing the prediction error of $s_{h+1}$ given $(s_h, a_h)$. Specifically, we minimize the following least-squares loss,
\#\label{eq::def_ls_loss}
W^k \leftarrow \argmin_{W \in\RR^{d_\cS\times d_\phi}}\sum^{H}_{h = 1}\sum^{k-1}_{\tau = 1}\|s^\tau_{h+1} - W \phi(s^\tau_h, a^\tau_h)\|^2_2 + \lambda \cdot \|W\|^2_{F},
\#
where we denote by $\|\cdot\|_{F}$ the matrix Frobenius norm. The optimization in \eqref{eq::def_ls_loss} has the following explicit form solution,
\#\label{eq::def_opt_soln}
W^k \leftarrow \biggl(\sum^{H}_{h = 1}\sum^{k-1}_{\tau = 1}s^\tau_{h+1} \phi(s^\tau_h, a^\tau_h)^\top\biggr)\Lambda^{-1}_{k}, \quad \Lambda_k = \sum^H_{h = 1}\sum^{k-1}_{\tau = 1}\phi(s^\tau_h, a^\tau_h)\phi(s^\tau_h, a^\tau_h)^\top + \lambda I.
\#
\vspace{0.1in}
\noindent{\bf (ii) Planning.} In the planning stage, we aim to derive a policy $\pi^k$ that interact with the environment. There are two objectives that we aim to achieve in deriving the policy $\pi^k$. (a) Firstly, the policy $\pi^k$ should properly exploit our knowledge about the environment to optimize the cumulative reward. (b) Secondly, the policy $\pi^k$ should also incorporate our uncertainty to the environment and conduct exploration to unexplored critical events. To properly balance between (a) and (b), we need to quantify our uncertainty to the environment. Such uncertainty quantification can be done by the matrix $\Lambda_k$ defined in \eqref{eq::def_opt_soln}. Specifically, it is known~\citep{} that the matrix $\Lambda_k$ defines the following confidence region
\$
\cG_k = \bigl\{W\in\RR^{d_\cS\times d_\phi}: \|(W - W^k)\Lambda^{1/2}_k\|^2_2 \leq \beta_k \bigr\}.
\$
For properly set $\beta_k$, it is known that $W^* \in \cG_k$ with high probability. Under such observation, previous attempts~\citep{jaksch2010near, kakade2020information} attain the balance between exploitation and exploration by finding the maximizer $\pi$ of $V^\pi(s_1; \{r_h\}_{h\in[H]}, W)$ for $W \in\cG_k$, which, however, is computationally intractable. To propose a computationally tractable alternative, previous empirical approaches utilizes ensemble models to estimate the epistemic uncertainty in fitting the model with finite observations. 


In this work, we investigate the approach of directly incorporating uncertainty into the reward functions. To this end, we introduce the following perturbed reward function,
\#\label{eq::perturbed_reward}
r^k_{h, \xi}(s_h, a_h) = \bigl\{r_h(s_h, a_h) + \phi(s_h, a_h)^\top \xi^k_h\bigr\}^{+}, \quad \forall h\in[H],~(s_h, a_h)\in\cS\times\cA,
\#
where the noise $\{\xi^k_h\}_{h \in[H]}$ are sampled independently from the Gaussian distribution $\xi^k_h\sim N(0, \sigma^2_{k}\cdot \Lambda^{-1}_k)$. Intuitively, such noise has larger variance in regions that are less explored by the agent, and smaller variance in regions that are well-explored. In addition, we clip the reward to ensure that the reward is positive.

Upon perturbing the reward, we update the policy by planning based on the estimated transition and perturbed reward as follows,
\#\label{eq::update_policy}
\pi^k = \{\pi^k_h\}_{h\in[H]} = \textrm{Plan}\bigl(s_1, \{r^k_{h, \xi} \}_{h\in[H]}, W^k\bigr).
\#
We remark that the reward perturbation in \eqref{eq::perturbed_reward} is conducted before planning. The perturbed reward defined in \eqref{eq::perturbed_reward} is fixed throughout the planning stage. We summarize \algoname in Algorithm \ref{algo::random_rew}.

\begin{algorithm}
	\caption{Planning with Randomized Reward}
	\label{algo::random_rew}
	\begin{algorithmic}[1] 
		\REQUIRE Dataset $\cD$, rewards $\{r_h\}_{h\in[H]}$.
		\STATE{\bf Initialization:} Set $\Lambda_1 = \lambda\cdot I$.
		\FOR{$ k = 1, 2, \ldots, K $}
		\STATE Generate a set of independent noise $\xi^k_h \sim N(0, \sigma^2_{k}\cdot \Lambda^{-1}_k)$ for all $h\in[H]$.
		\STATE Obtain the perturbed rewards
		\$
			r^k_{h, \xi}(s_h, a_h) = \{r_h(s_h, a_h) + \phi(s_h, a_h)^\top \xi^k_h\}^{+}, \quad \forall (s_h, a_h)\in\cS\times\cA, ~h\in[H].
		\$
		\STATE Obtain the policy $\pi^k$ by calling the planning oracle,
		\$
		\pi^k = \textrm{Plan}(s_1, \{r^k_{h, \xi} \}_{h\in[H]}, W^k).
		\$
		\STATE Execute $\pi^k$ to sample a trajectory $\tau^k = \{s^k_1, a^k_1, s^k_2, ..., s^k_{H}, a^k_{H}, s^k_{H+1}\}$.
		\STATE Update the dataset $\cD_k \leftarrow \cD_{k-1} \cup \tau^k$.
		\STATE Update the model and covariate matrix
		\$
		W^{k+1} &\leftarrow \argmin_{W\in\RR{\cS\times d}} \sum^H_{h=1}\sum^{k}_{\tau = 0} \|s^\tau_{h+1} - W \phi(s^\tau_{h}, a^\tau_h)\|^2_2 + \lambda\cdot \|W\|^2_2,\\
		\Lambda_{k+1} &\leftarrow \Lambda_k + \sum^H_{h=1}\phi(s^k_h, a^k_h)\phi(s^k_h, a^k_h)^\top.
		\$
		\ENDFOR
		\end{algorithmic}
\end{algorithm}

\section{Theoretical Analysis}
\label{sec::theory_KNR}
In this section, we analyze \algoname in Algorithm \ref{algo::random_rew}. Our key observations are,
\begin{itemize}
\item the reward perturbation in \algoname leads to optimistic planning for at least a constant proportion of the interactions, and
\item such partial optimism guaranteed by \algoname is sufficient for exploration.
\end{itemize}

\subsection{Partial Optimism}
\label{sec::partial_opt}
In the sequel, we show that \algoname enjoys a partial optimism. Specifically, the following lemma holds.
\begin{lemma}[Partial Optimism]
\label{lem::main_part_opt}
Under the good event $W^* \in \cG_k = \{W\in\RR^{d_\cS\times d_\phi}: \|(W - W^k)\Lambda^{1/2}_k\|^2_2 \leq \beta_k\}$, for properly selected $\sigma_k$, it holds with probability at least $\Phi(-1)$ that
\#\label{eq::partial_opt}
V^*_1\bigl(s_1; \{r_h\}_{h\in[H]}, W^*\bigr)  - V^{\pi^k}_1\bigl(s_1; \{r^k_{h, \xi}\}_{h\in[H]}, W^k\bigr)  \leq 0,
\#
where $\Phi(\cdot)$ is the cumulative distribution function of the standard Gaussian distribution.
\end{lemma}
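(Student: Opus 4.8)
The plan is to exploit that $\pi^k$ is \emph{optimal} for the reward--model pair $(\{r^k_{h,\xi}\}, W^k)$ by construction of the planning oracle, so that $V^{\pi^k}_1(s_1;\{r^k_{h,\xi}\},W^k) = \max_\pi V^{\pi}_1(s_1;\{r^k_{h,\xi}\},W^k) \ge V^{\pi^*}_1(s_1;\{r^k_{h,\xi}\},W^k)$, where $\pi^*$ is the true optimal policy. It therefore suffices to establish the stronger inequality $V^*_1(s_1;\{r_h\},W^*) \le V^{\pi^*}_1(s_1;\{r^k_{h,\xi}\},W^k)$ with probability at least $\Phi(-1)$; since $V^*_1(s_1;\{r_h\},W^*)=V^{\pi^*}_1(s_1;\{r_h\},W^*)$, this reduces the lemma to comparing the \emph{same} policy $\pi^*$ evaluated under two different reward--model pairs.

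Next I would decompose the gap $V^{\pi^*}_1(s_1;\{r^k_{h,\xi}\},W^k)-V^{\pi^*}_1(s_1;\{r_h\},W^*)$ into a \textbf{reward-perturbation term} and a \textbf{model-mismatch term},
\[
\underbrace{\big[V^{\pi^*}_1(s_1;\{r^k_{h,\xi}\},W^k)-V^{\pi^*}_1(s_1;\{r_h\},W^k)\big]}_{\text{(I) reward}}+\underbrace{\big[V^{\pi^*}_1(s_1;\{r_h\},W^k)-V^{\pi^*}_1(s_1;\{r_h\},W^*)\big]}_{\text{(II) model}},
\]
both taken along trajectories generated by $\pi^*$ under the \emph{fitted} model $W^k$. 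For (I), since the trajectory law is fixed and the clipping satisfies $\{x\}^+\ge x$, I would lower bound it by $\EE_{\pi^*,W^k}[\sum_h \phi(s_h,a_h)^\top\xi^k_h]=\sum_h (\bar\phi^k_h)^\top\xi^k_h$ with $\bar\phi^k_h:=\EE_{\pi^*,W^k}[\phi(s_h,a_h)]$; because the $\xi^k_h\sim N(0,\sigma_k^2\Lambda_k^{-1})$ are independent across $h$ and of the history, this is a centered Gaussian $B\sim N\big(0,\sigma_k^2\sum_h\|\bar\phi^k_h\|^2_{\Lambda_k^{-1}}\big)$. For (II) I would invoke a simulation (value-difference) lemma, telescoping under $W^k$ so that the outer expectation matches (I): namely (II) $=\sum_h\EE_{\pi^*,W^k}\big[\langle \PP^{W^k}(\cdot\,|\,s_h,a_h)-\PP^{W^*}(\cdot\,|\,s_h,a_h),\,V^{\pi^*}_{h+1}(\cdot;\{r_h\},W^*)\rangle\big]$, and bound each inner term using $\|V^{\pi^*}_{h+1}\|_\infty\le H$ together with the total-variation/Pinsker estimate between the Gaussians $N(W^k\phi,\sigma^2 I)$ and $N(W^*\phi,\sigma^2 I)$, which is at most $\|(W^k-W^*)\phi(s_h,a_h)\|_2/(2\sigma)$. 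On the good event $W^*\in\cG_k$, the factorization $\|(W^k-W^*)\phi\|_2\le\sqrt{\beta_k}\,\|\phi\|_{\Lambda_k^{-1}}$ then yields $|\text{(II)}|\le U:=\tfrac{H\sqrt{\beta_k}}{2\sigma}\sum_h\EE_{\pi^*,W^k}[\|\phi(s_h,a_h)\|_{\Lambda_k^{-1}}]$.

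Finally I would close the argument by Gaussian anti-concentration. Combining the two bounds, the gap is at least $B-U$, where $U$ is measurable with respect to the history (hence deterministic once we condition on the data and the good event) and $B$ carries all randomness from $\xi^k$. Thus the probability of optimism is at least $P(B\ge U)=\Phi\!\big(-U/s_k\big)$ with $s_k^2=\sigma_k^2\sum_h\|\bar\phi^k_h\|^2_{\Lambda_k^{-1}}$, and this is at least $\Phi(-1)$ as soon as $s_k\ge U$. It therefore remains only to verify that the perturbation standard deviation dominates the confidence-width bound $U$ for the stated choice of $\sigma_k$; the independence of $\xi^k$ from the good event makes this conditioning clean.

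I expect the main obstacle to be step (II) together with the matching in the last step. The delicate point is that the Gaussian variance in (I) is governed by $\sum_h\|\bar\phi^k_h\|^2_{\Lambda_k^{-1}}=\sum_h\|\EE[\phi]\|^2_{\Lambda_k^{-1}}$, whereas the simulation-lemma bound on (II) naturally produces $\sum_h\EE[\|\phi\|_{\Lambda_k^{-1}}]$, and Jensen's inequality relates these in the unfavorable direction. Controlling this discrepancy---either through a first-order (linearized) simulation lemma that contracts $(W^k-W^*)$ against the \emph{expected} feature $\bar\phi^k_h$, or by absorbing it into $\sigma_k$ via Cauchy--Schwarz at the cost of extra factors of $H$---is the technical crux, and is consistent with the remark that such scalings affect only the $H$-dependence of the regret.
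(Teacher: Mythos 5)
Your proposal follows essentially the same route as the paper's own proof (Lemma \ref{lem::opt} in the appendix): use the optimality of $\pi^k$ under $(\{r^k_{h,\xi}\}_{h\in[H]}, W^k)$ to reduce to evaluating $\pi^*$ under the perturbed pair, lower bound the clipped perturbation by $\phi(s_h,a_h)^\top\xi^k_h$, control the model-mismatch term via the Gaussian expected-difference lemma together with $\|(W^k-W^*)\phi\|_2\le\|W^k-W^*\|_{\Lambda_k}\cdot\|\phi\|_{\Lambda_k^{-1}}\le\sqrt{\beta_k}\cdot\|\phi\|_{\Lambda_k^{-1}}$ on the good event, and close with Gaussian anti-concentration under the choice $\sigma_k^2=H^3\beta_k/\sigma^2$. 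The only structural difference is that you separate the reward and model contributions globally, whereas the paper interleaves them in a step-by-step recursion; this is cosmetic.

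The crux you flag at the end, however, is a genuine issue, and the paper does not resolve it. The paper's final step establishes, for each \emph{fixed} trajectory, that $\sum_h\phi(s_h,a_h)^\top\xi^k_h$ exceeds $(\sqrt{\beta_kH^2}/\sigma)\sum_h\|\phi(s_h,a_h)\|_{\Lambda_k^{-1}}$ with probability at least $\Phi(-1)$ (via Cauchy--Schwarz on the per-trajectory variance), and then ``takes integration with respect to the trajectory'' to conclude the same for the expectation over trajectories. That exchange of quantifiers is not valid: the $\Phi(-1)$-probability event depends on the trajectory, and a pointwise high-probability lower bound on an integrand does not transfer to its integral. Your observation that the honest comparison is between the perturbation scale $\sigma_k\bigl(\sum_h\|\EE[\phi_h]\|^2_{\Lambda_k^{-1}}\bigr)^{1/2}$ and the threshold proportional to $\sum_h\EE\bigl[\|\phi_h\|_{\Lambda_k^{-1}}\bigr]$, with Jensen pointing the wrong way (indeed $\EE[\phi_h]$ can vanish while $\EE[\|\phi_h\|_{\Lambda_k^{-1}}]$ does not), is exactly right. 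Neither of your two suggested patches obviously closes this gap, but neither does the paper; your proposal is faithful to the paper's argument and more candid about where it is incomplete.
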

\begin{proof}
See \S\ref{sec::pf_opt} for a detailed proof.
\end{proof}
Lemma \ref{lem::main_part_opt} ensures that at least $\Phi(-1)$ of the value function estimation in \algoname overestimates the optimal value function $V^*_1(\cdot;\{r_h\}_{h\in[H]}, W^*)$ that we wish to obtain. As a consequence, the randomized reward in \algoname guarantees that at least $\Phi(-1)$ of the trajectories contributes to exploration under the optimism principle. Intuitively, such optimism holds since, (i) on the one hand, the randomized Gaussian perturbation ensures that the perturbed reward has a sufficiently large probability to be larger than the true reward, and (ii) on the other hand, the good event $\cG_{k}$ ensures that the value functions estimated under the true model $(\{r_h\}_{h\in[H]}, W^*)$ does not deviate too much from the value function estimated under the current model $(\{r_h\}_{h\in[H]}, W^k)$ without perturbation.

\subsection{Regret Analysis}
We highlight that the optimism guarantee in Lemma \ref{lem::main_part_opt} alone does not guarantee optimal regret. To conduct reasonable exploration, in addition to optimism, we need to ensure that the overestimation induced by perturbed reward does not deviated too far away from the value functions under the true reward. In our work, we ensure such deviation guarantee by properly incorporating the uncertainty into the transition dynamics. More specifically, recall that we define the reward perturbation as follows
\$
r^k_{h, \xi}(s_h, a_h) = \bigl\{r_h(s_h, a_h) + \phi(s_h, a_h)^\top \xi^k_h\bigr\}^{+}, \quad \forall h\in[H],~(s_h, a_h)\in\cS\times\cA,
\$
where the noise $\{\xi^k_h\}_{h \in[H]}$ are sampled independently from the Gaussian distribution $\xi^k_h\sim N(0, \sigma^2_{k}\cdot \Lambda^{-1}_k)$. Such perturbation introduces the noise $\xi^k_h$, whose variance scales with the model uncertainty $\Lambda_k$.~Such reward perturbation ensures that, with a high probability, the bias in value estimation under the perturbed reward scales with the error in transition model estimation in \eqref{eq::def_opt_soln}. Thus, as long as we have reasonable model estimation, such as minimizing least-squares error in \eqref{eq::def_opt_soln}, the overestimation induced by perturbed reward is small. Specifically, the following Theorem guarantees that \algoname has an optimal regret in $K$.
\begin{theorem}
\label{thm::regret_knr}
Let $\lambda = 1$ and $\sigma_k^2 = H^3\cdot \beta_k/\sigma^2$ with $\beta_k$ specified in Appendix \ref{sec::good_event}. Under Assumptions \ref{asu::scal_param} and \ref{asu::oracle}, it holds for $K > 1/\Phi(-1)$ that
\$
\EE\bigl[R(K)\bigr] =\cO\Bigl( (d_\cS + d_\phi)^{3/2}\cdot H^{7/2}\cdot\log^{2}(K)\cdot \sqrt{K}\Bigr),
\$
where the expectation is taken with respect to the randomized reward perturbation and trajectory sampling in \algonospace.
\end{theorem}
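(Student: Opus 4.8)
The plan is to adapt the frequentist regret template for randomized value-function methods (\cite{russo2019worst, zanette2020frequentist}) to the model-based KNR setting, combining a concentration (good-event) step, a regret decomposition into an optimism gap and an on-policy estimation error, and a final summation via the elliptical potential lemma. First I would establish the good event: using a self-normalized martingale concentration bound for the matrix least-squares estimator $W^k$ in \eqref{eq::def_opt_soln}, I would show that with the $\beta_k$ of Appendix \ref{sec::good_event} (of order $\tilde\cO((d_\cS+d_\phi)\sigma^2)$) one has $W^*\in\cG_k$ for all $k$ simultaneously with probability at least $1-\delta$, on which the model error obeys $\|(W^*-W^k)\phi(s,a)\|_2 \le \sqrt{\beta_k}\,\|\phi(s,a)\|_{\Lambda_k^{-1}}$. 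The complementary $\delta$-event contributes at most $\delta HK$ to $\EE[R(K)]$ and is absorbed by taking $\delta = 1/K$.

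Writing $\hat V^k := V^{\pi^k}_1(s_1;\{r^k_{h,\xi}\}_{h\in[H]},W^k)$ for the planned randomized value, I decompose the per-episode regret as
\[
V^*_1(s_1;\{r_h\},W^*) - V^{\pi^k}_1(s_1;\{r_h\},W^*) = \underbrace{(V^* - \hat V^k)}_{\text{optimism gap}} + \underbrace{\bigl(\hat V^k - V^{\pi^k}_1(s_1;\{r_h\},W^*)\bigr)}_{\text{on-policy estimation error}}.
\]
The estimation error compares the same policy $\pi^k$ under $(\{r^k_{h,\xi}\},W^k)$ versus $(\{r_h\},W^*)$, so a simulation/value-difference lemma telescopes it into per-step errors along the trajectory law of $\pi^k$: a reward-perturbation part, controlled via $\{x\}^+-x\le|x|$ and $\EE|\phi^\top\xi^k_h| = \cO(\sigma_k\|\phi\|_{\Lambda_k^{-1}})$, and a transition part, controlled by the total-variation distance between $N(W^k\phi,\sigma^2 I)$ and $N(W^*\phi,\sigma^2 I)$, which is $\cO(\sigma^{-1}\sqrt{\beta_k}\|\phi\|_{\Lambda_k^{-1}})$, times the value range $H$. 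Because the executed trajectory is drawn exactly from $(\pi^k,W^*)$, these expectations equal the conditional expectations of the realized $\|\phi(s^k_h,a^k_h)\|_{\Lambda_k^{-1}}$, which is precisely what the elliptical potential lemma bounds.

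The hard part is the optimism gap. Lemma \ref{lem::main_part_opt} guarantees $V^*-\hat V^k\le 0$ only with the \emph{constant} probability $\Phi(-1)$, and the naive route --- bounding $\EE[(V^*-\hat V^k)^+]$ by the Gaussian spread of the optimism lower bound --- yields a term in the comparator feature expectation $\EE_{\pi^*,W^k}[\phi]$, which is off-policy and is \emph{not} summable by the elliptical potential of the on-policy data. The resolution, following the decoupling and anti-concentration argument of \cite{russo2019worst} and \cite{zanette2020frequentist}, is to charge the expected optimism gap to the on-policy estimation error of the previous step up to the factor $\Phi(-1)^{-1}$: intuitively, since $\pi^k$ is optimistic on a constant fraction of episodes, the data gathered on exactly those episodes shrinks the comparator-direction uncertainty, so optimism and exploration are coupled. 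Establishing the conditional inequality $\EE[V^*-\hat V^k\mid\cF_{k-1}] \le \Phi(-1)^{-1}\,\EE[\hat V^k-V^{\pi^k}_1(s_1;\{r_h\},W^*)\mid\cF_{k-1}] + (\text{lower order})$ rigorously is the crux, and it is what makes the constant-probability optimism of Lemma \ref{lem::main_part_opt} suffice.

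Finally I would sum over $k\in[K]$ and $h\in[H]$. Combining the two stages gives $\EE[R(K)] = \cO\bigl((1+\Phi(-1)^{-1})\sum_{k,h}(\sigma_k + H\sigma^{-1}\sqrt{\beta_k})\,\EE\|\phi(s^k_h,a^k_h)\|_{\Lambda_k^{-1}}\bigr)$. Applying Cauchy--Schwarz over the $KH$ terms with the elliptical potential bound $\sum_{k,h}\|\phi(s^k_h,a^k_h)\|^2_{\Lambda_k^{-1}} = \cO(d_\phi\log K)$ (valid since $\lambda=1$ and $\|\phi\|_2\le1/\sqrt H$) produces the $\sqrt K$ rate. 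Substituting $\sigma_k^2 = H^3\beta_k/\sigma^2$ and $\beta_k=\tilde\cO((d_\cS+d_\phi)\sigma^2)$ and propagating the $H$-factors from Assumption \ref{asu::scal_param} through the simulation lemma yields the claimed $\cO((d_\cS+d_\phi)^{3/2}H^{7/2}\log^2(K)\sqrt K)$; this power bookkeeping is routine but must be tracked carefully through the normalization. The hypothesis $K>1/\Phi(-1)$ guarantees at least one optimistic episode in expectation.
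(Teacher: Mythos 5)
Your proposal follows essentially the same route as the paper: the same good-event construction, the same decomposition into an optimism gap plus an on-policy estimation error, the same resolution of the optimism gap via the Russo--Zanette decoupling (which the paper implements with an independent copy of the noise and a worst-case perturbation inside the confidence set, yielding exactly your $\Phi(-1)^{-1}$ amplification onto on-policy quantities), and the same Cauchy--Schwarz plus elliptical-potential summation. The crux inequality you flag but do not fully derive is precisely what the paper's Lemma~\ref{lem::opt_regret} establishes, so the sketch is correct and aligned with the paper's argument.
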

\begin{proof}
See \S\ref{sec::pf_reg_KNR} for a detailed proof.
\end{proof}
We remark that the rate in Theorem \ref{thm::regret_knr} is information-theoretically optimal in the number of interactions $K$ with the environment~\citep{jiang2017contextual}. We remark that comparing with the optimal planning approach such as LC3~\citep{kakade2020information}, \algoname suffers from extra dependencies in $H$, $d_\phi$ and $d_\cS$, which arises due to the random perturbation of rewards.~In addition, we highlight that, comparing with PC-MLP~\citep{song2021pc}, our algorithm attains the optimal $\cO(\sqrt{K})$ dependency with respect to $K$. Such stronger sample efficiency arises as \algoname does not require extra sampling to compute policy cover matrix, which is required by PC-MLP.

\vspace{0.1in}
\noindent{\bf Exploration with Model Uncertainty.}
We remark that the high probability optimism based on Thompson sampling typically arises in the analysis of randomized value iterations for RL~\citep{russo2019worst,zanette2020frequentist}. In contrast, our work utilizes such idea for model-based exploration. To understand such counterpart in model-based exploration, we highlight that for both model-based and model-free exploration, designing provable exploration hinges on incorporating the \textit{model uncertainty} into the value functions and its corresponding policy. In value-based approaches such as LSVI-UCB~\citep{jin2020provably}, such model uncertainty is estimated via regression of target value functions on $s_{h+1}$ with respect to $(s_h, a_h)$, and is incorporated into value functions as the bonus. In model-based approaches such as UCRL and its variants~\citep{jaksch2010near, kakade2020information}, such model uncertainty is characterized by a confidence region of transition dynamics, and is incorporated into value functions via optimistic planning. In addition, for algorithms that utilizes policy cover~\citep{song2021pc, agarwal2020pc}, such model uncertainty is obtained by aggregating the visitation trajectories of current policies. Our work instantiates such idea by directly perturbing the reward functions based on model uncertainty, which serves as a primitive view of all the exploration algorithms.

\section{A Generalization with General Function Approximation}
\label{sec::gen_approx}
A key observation from the design of \algoname is that sufficient exploration is guaranteed as long as at least a fixed proportion of iterations are dedicated to exploration with optimism. To further validate such observation, we generalize \algoname by general function approximation in the sequel. We summarize the algorithm in Algorithm \ref{algo::gen_approx}. To conduct our analysis, we assume that the estimation of transition dynamics is sufficiently accurate and satisfies the following calibrated model assumption.
\begin{algorithm}
	\caption{Planning with Randomized Reward}
	\label{algo::gen_approx}
	\begin{algorithmic}[1] 
		\REQUIRE Rewards $\{r_h\}_{h\in[H]}$.
		\STATE{\bf Initialization:} Initialize buffer $\cD_0$ as an empty set. Initialize the transition dynamics $\PP^1$.
		\FOR{$ k = 1, 2, \ldots, K $}
		\STATE Generate the randomized reward $\{r^k_{h, \xi}\}_{h\in[H]}$.
		\STATE Obtain the policy $\pi^k$ by calling the planning oracle, $\pi^k = \textrm{Plan}(s_1, \{r^k_{h, \xi} \}_{h\in[H]}, \PP^k)$.
		\STATE Execute $\pi^k$ to sample a trajectory $\tau^k = \{s^k_1, a^k_1, s^k_2, ..., s^k_{H}, a^k_{H}, s^k_{H+1}\}$.
		\STATE Update the dataset $\cD_k \leftarrow \cD_{k-1} \cup \tau^k$.
		\STATE Update the transition dynamics $\PP^{k+1}$ based on the dataset $\cD_k$.
		\ENDFOR
		\end{algorithmic}
\end{algorithm}
\begin{assumption}[Calibrated model]
\label{asu::cali_model}
Let $\PP^k$ be the transition dynamics estimated in the $k$-th iteration. For all $\delta>0$ and $k\in[K]$, it holds with probability at least $1 - \delta$ that
\$
\|\PP(\cdot\given s_h, a_h) -  \PP^k(\cdot\given s_h, a_h)\|_1 \leq \beta(\delta) \cdot \iota_k(s_h, a_h), \quad \forall k\in[K], ~(s_h, a_h)\in\cS\times\cA.
\$
Meanwhile, it holds that $\iota_k \leq 1$ for all $k\in[K]$.
\end{assumption}
Here the parameter $\beta(\delta)$ characterizes the variance in concentration, which typically scales with $\log(1/\delta)$. Similar assumption also arises in the analysis under general function approximation~\citep{curi2020efficient, kidambi2021mobile}. In addition, we remark that such assumption generalizes various commonly adopted parametric models, including the linear MDP model~\citep{jin2020provably} and the KNR model~\citep{kakade2020information} we adopted in previous sections. Correspondingly, we propose the following complexity metric for the RL problems,
\#\label{eq::def_info_gain}
I_K = \max_{\{\cD_k\}_{k\in[K]}} \sum^K_{k = 1}\sum^H_{h = 1} \iota^2_k(s^k_h, a^k_h).
\#
Here the maximization is taken over all possible dataset $\{\cD_k\}_{k\in[K]}$ collected by an online learning algorithm with $|\cD_k| = H$ for all $k\in[K]$. We remark that similar complexity metric also arises in the analysis of model-based RL with general function approximations~\citep{curi2020efficient, kakade2020information, kidambi2021mobile}.

We cast the following conditions on the reward randomization that ensures sufficient exploration.
\begin{condition}[(Optimism)]
\label{cond::opt_perturb_reward}
It holds for the randomized reward function $\{r^k_{h, \xi}\}_{h\in[H]}$ that
\$
\sum^H_{h = 1}r^k_{h, \xi}(s_h, a_h) - r_h(s_h, a_h) \geq H\cdot\beta(\delta)\cdot \sum^H_{h = 1}\iota_k(s_h, a_h),
\$
which holds uniformly for all trajectories $\{(s_h, a_h)\}_{h\in[H]}$ with probability at least $p_0$.
\end{condition}
\begin{condition}[(Concentration of Rewards)]
\label{cond::centered_reward}
It holds for all $\delta' > 0$ that $|r^k_{h, \xi} - r_h| \leq C_r(\delta')\cdot \iota_k$ with probability at least $1 - \delta'$ for all $(k,h)\in[K]\times[H]$, where $\iota_k$ is defined in Assumption \ref{asu::cali_model}.
\end{condition}
\vspace{0.1in}
\noindent{\bf Intuition Behind Reward Randomization Conditions.} We remark that Conditions \ref{cond::opt_perturb_reward} and \ref{cond::centered_reward} are the key factors for the success of the randomized reward in \algonospace. On the one hand, Condition \ref{cond::opt_perturb_reward} ensures that a constant $p_0$ proportion of the evaluations results in optimistic value functions. Such optimistic value estimation further allows for exploration under the optimism principle. On the other hand, the concentration condition in Condition \ref{cond::centered_reward} ensures that with high probability, the value function estimated under the randomized reward does not deviate too much from that evaluated under the true reward.

The following Theorem upper bounds the regret of Algorithm \ref{algo::gen_approx} under Assumption \ref{asu::cali_model}.
\begin{theorem}[Regret Bound]
\label{thm::reg_gen_approx}
Under Assumption \ref{asu::cali_model}, for the randomized reward that satisfies Conditions \ref{cond::opt_perturb_reward} and \ref{cond::centered_reward}, the regret of Algorithm \ref{algo::gen_approx} is bounded as follows,
\$
\EE\bigl[R(K)\bigr] = \cO\Bigl(\textrm{Poly}\bigl(C_r(1/K), \beta(1/K), H\bigr)\cdot I_K \cdot \sqrt{K}\Bigr).
\$
\end{theorem}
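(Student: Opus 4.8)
The plan is to bound the per-episode regret $V^*_1(s_1;\{r_h\},\PP) - V^{\pi^k}_1(s_1;\{r_h\},\PP)$ by introducing the planned value $\tilde V^k := V^{\pi^k}_1(s_1;\{r^k_{h,\xi}\},\PP^k)$ that the oracle actually maximizes, and splitting the regret as $V^*_1 - V^{\pi^k}_1 = \underbrace{(V^*_1 - \tilde V^k)}_{\text{(A): optimism}} + \underbrace{(\tilde V^k - V^{\pi^k}_1)}_{\text{(B): estimation error}}$. My goal is to show that, up to $\text{Poly}(C_r,\beta,H)$ factors, $\EE[R(K)]$ is at most $\sum_{k}\EE_{\PP,\pi^k}\big[\sum_{h}\iota_k(s_h,a_h)\big]$, i.e. to reduce everything to the expected cumulative bonus collected along the \emph{executed} trajectories, and then convert this to $I_K\sqrt{K}$. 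Throughout I condition on the calibration event of Assumption \ref{asu::cali_model} (probability $\ge 1-\delta$) and the reward-concentration event of Condition \ref{cond::centered_reward} (probability $\ge 1-\delta'$), eventually setting $\delta=\delta'=1/K$ so that the residual events contribute only an $\cO(\text{Poly}(H))$ additive term, since each episode's regret is at most $H$.

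First I would control term (B) with a value-difference (simulation) lemma applied to the two MDPs $(\{r^k_{h,\xi}\},\PP^k)$ and $(\{r_h\},\PP)$ under the common policy $\pi^k$, expanding along trajectories drawn from the true transition $\PP$. This produces a reward-perturbation part $\EE_{\PP,\pi^k}[\sum_h (r^k_{h,\xi}-r_h)]$, bounded by $C_r(\delta')\,\EE_{\PP,\pi^k}[\sum_h\iota_k]$ via Condition \ref{cond::centered_reward}, and a transition part $\EE_{\PP,\pi^k}[\sum_h (\PP^k-\PP)(\cdot\given s_h,a_h)\,\tilde V^k_{h+1}]$, bounded by $\|\PP^k-\PP\|_1\cdot\|\tilde V^k_{h+1}\|_\infty \le \beta(\delta)\,\iota_k\cdot H(1+C_r(\delta'))$ using the calibration bound together with $\iota_k\le 1$ and the fact that the perturbed value-to-go is at most $H(1+C_r(\delta'))$. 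Summing over $h$ gives $\EE[\text{(B)}]\le \text{Poly}(C_r(\delta'),\beta(\delta),H)\cdot\EE_{\PP,\pi^k}[\sum_h\iota_k(s_h,a_h)]$, which is exactly the on-policy bonus I want.

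The delicate part is term (A), and this is where Condition \ref{cond::opt_perturb_reward} enters. Since $\pi^k$ maximizes the perturbed value, $\tilde V^k \ge V^{\pi^*}_1(s_1;\{r^k_{h,\xi}\},\PP^k)$, and repeating the value-difference expansion for the fixed policy $\pi^*$ shows that the optimism event $\cE_k = \{\tilde V^k \ge V^*_1\}$ has probability at least $p_0$: on $\cE_k$ the reward gain guaranteed by Condition \ref{cond::opt_perturb_reward} dominates the transition error, making term (A) nonpositive. \textbf{The main obstacle is transferring this constant-probability optimism into a bound on $\EE[\text{(A)}]$ without paying a $\pi^*$-dependent cost.} The naive route of bounding $(V^*_1-\tilde V^k)^+$ through $\pi^*$ yields the bonus $\EE_{\PP^k,\pi^*}[\sum_h\iota_k]$ evaluated under the optimal policy, which is never executed and therefore cannot be telescoped into $I_K$; and the cruder split ``(A)$\le 0$ with probability $p_0$, else (A)$\le H$'' injects a $(1-p_0)H$ term per episode, i.e. linear regret. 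The resolution, following the partial-optimism technique of \cite{russo2019worst, zanette2020frequentist}, is to combine the anti-concentration of Condition \ref{cond::opt_perturb_reward} (which places $V^*_1$ within the upper tail of the law of $\tilde V^k$) with the spread control of Condition \ref{cond::centered_reward} (which, through an envelope argument on the maximized value, bounds the fluctuation of $\tilde V^k$ on the scale of the on-policy bonus). This gives $\EE[(V^*_1-\tilde V^k)^+] \le c(p_0)\cdot\text{Poly}(C_r,\beta,H)\cdot\EE_{\PP,\pi^k}[\sum_h\iota_k]$ for a constant $c(p_0)$ depending only on $p_0$, once again an on-policy quantity.

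Finally I would sum the two bounds over $k=1,\dots,K$. Replacing each conditional expectation $\EE_{\PP,\pi^k}[\sum_h\iota_k(s_h,a_h)]$ by the realized trajectory sum $\sum_h\iota_k(s^k_h,a^k_h)$ costs only a martingale (Azuma) deviation of order $\text{Poly}(H)\sqrt{K}$. Cauchy-Schwarz across the $KH$ terms then gives $\sum_k\sum_h\iota_k(s^k_h,a^k_h) \le \sqrt{KH}\,\big(\sum_k\sum_h\iota^2_k(s^k_h,a^k_h)\big)^{1/2}\le \sqrt{KH\cdot I_K}$ by the definition of $I_K$ in \eqref{eq::def_info_gain}. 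Since $I_K\ge 1$, this is at most $\sqrt{H}\,I_K\sqrt{K}$, and folding $\sqrt{H}$ and the constants $c(p_0)$, $C_r(1/K)$, $\beta(1/K)$ into $\text{Poly}(C_r(1/K),\beta(1/K),H)$ yields the claimed $\cO\big(\text{Poly}(C_r(1/K),\beta(1/K),H)\cdot I_K\sqrt{K}\big)$ bound.
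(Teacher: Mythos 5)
Your decomposition into an optimism term and an estimation term, your simulation-lemma bound for the estimation term (reward part via Condition \ref{cond::centered_reward}, transition part via Assumption \ref{asu::cali_model} together with $\|\tilde V^k\|_\infty \le H(1+C_r(\delta'))$ and $\iota_k\le 1$), and your final Cauchy--Schwarz/$I_K$ step all coincide with the paper's proof (Lemmas \ref{lem::opt_bound} and \ref{lem::est_bound}). You also correctly identify the crux: constant-probability optimism must be converted into a bound on $\EE[(V^*_1-\tilde V^k)^+]$ in terms of the bonus collected by the \emph{executed} policy, and neither the naive $\pi^*$-rollout nor the ``$\le 0$ with probability $p_0$, else $\le H$'' split works.

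However, at exactly that point your argument is asserted rather than derived: ``combine the anti-concentration \ldots with \ldots an envelope argument on the maximized value'' does not specify the mechanism, and this is the only nontrivial step of the theorem. The paper's mechanism has two concrete ingredients you would need to supply. First, an independent copy $\{\tilde\xi^k_h\}_{h\in[H]}$ of the reward perturbation, so that on the optimism event $\cE_k$ one has $V^*_1 \le \EE_{\tilde\xi}\bigl[V^{\tilde\pi^k}_1 \given \cE_k\bigr]$, and the conditional expectation can later be traded for an unconditional one at the price of a factor $(p_0-\delta)^{-1}$. Second---and this is what makes that trade legitimate---a history-measurable reference value $V^{\underline\pi^k}_1 = \min_{\tilde r\in\cG_{\good}}\max_\pi V^\pi_1\bigl(s_1;\{\tilde r_h\}_{h\in[H]},\PP^k\bigr)$ that lower-bounds $V^{\tilde\pi^k}_1$ surely on the good event: only because $V^{\tilde\pi^k}_1 - V^{\underline\pi^k}_1 \ge 0$ can one write $\EE[\,\cdot\given\cE_k]\le \EE[\,\cdot\,]/\PP(\cE_k)$, and only because $\tilde\xi$ is an i.i.d.\ copy of $\xi$ can $\EE_{\tilde\xi}\bigl[V^{\tilde\pi^k}_1 - V^{\underline\pi^k}_1\bigr]$ be replaced by $\EE_{\xi}\bigl[V^{\pi^k}_1 - V^{\underline\pi^k}_1\bigr]$, which is then bounded by two on-policy simulation-lemma passes through the true model along $\pi^k$ (using that $\underline\pi^k$ is optimal for the reference model, so it may be replaced by $\pi^k$ for free). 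Without naming the reference value and the independent copy, your inequality $\EE[(V^*_1-\tilde V^k)^+]\le c(p_0)\cdot\textrm{Poly}(C_r,\beta,H)\cdot\EE_{\PP,\pi^k}[\sum_h\iota_k]$ is a statement of the goal rather than a proof of it; the remainder of your argument is sound and matches the paper.
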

\begin{proof}
See \S\ref{sec::pf_gen_approx} for the detailed proof.
\end{proof}
We remark that for commonly used model parameterization such as linear MDP and KNR, the parameter $\beta(1/K)$ typically scales with $\log(K)$. Meanwhile, for properly designed reward randomization scheme, the term $C_r(1/K)$ also scales with $\log(K)$. Thus, Theorem \ref{thm::reg_gen_approx} shows that Algorithm \ref{algo::gen_approx} has a regret bound that scales with $\tilde\cO( I_K\cdot\sqrt{K})$, which matches the previous regret bound of exploration under model-based RL~\citep{}.
\subsection{Design of Randomized Reward}
We remark that in practice, the model uncertainty $\{\iota_k\}_{k\in[K]}$ defined in Assumption \ref{asu::cali_model} can be estimated based on disagreement of ensemble models. Thus, to instantiate Algorithm \ref{algo::gen_approx}, it remains to design proper reward randomization scheme that satisfies Conditions \ref{cond::opt_perturb_reward} and \ref{cond::centered_reward}. In what follows, we present examples of such randomized rewards.

\begin{example}[Gaussian Perturbation]
\label{eg::gaussian}
Let $r^k_{h, \xi}(s^k_h, a^k_h) = r(s^k_h, a^k_h) + \xi^k_h$, where $\{\xi^k_h\}_{(k, h)\in[K]\times[H]}$ are sampled independently from the Gaussian distribution $N(0, \sigma_k\cdot \iota^2_k(s^k_h, a^k_h))$. Under regulation conditions specified in \S\ref{sec::egs_random}, the randomized reward $\{r^k_{h, \xi}\}_{(k, h)\in[K]\times[H]}$ satisfies Conditions \ref{cond::opt_perturb_reward} and \ref{cond::centered_reward}.
\end{example}

\begin{example}[Bernoulli Perturbation]
\label{eg::bernoulli}
Let $r^k_{h, \xi}(s^k_h, a^k_h) = r(s^k_h, a^k_h) + \xi^k_h\cdot \sigma'_{k}\iota_k(s^k_h, a^k_h)$, where $\xi^k_h = 1$ with probability $1/2$ and $\xi^k_h=-1$ with probability $1/2$. For the parameters $\{\sigma'_{k}\}_{(k)\in[K]}$ specified in \S\ref{sec::egs_random}, the randomized reward $\{r^k_{h, \xi}\}_{(k, h)\in[K]\times[H]}$ satisfies Conditions \ref{cond::opt_perturb_reward} and \ref{cond::centered_reward}.
\end{example}

\vspace{0.1in}
\noindent{\bf A Comparison with Bonus-based Approaches.}
We remark that our proposed randomized reward is closely related to the reward bonus for model-based RL, which arises in the recent progress of exploration under model-based RL~\citep{kidambi2021mobile, song2021pc}. Such bonus-based approaches typically estimate the model uncertainty $\{\iota_k\}_{k\in[K]}$ and then design reward bonus that incorporates such uncertainty estimation. Indeed, one may view the randomized rewards in Examples \ref{eg::gaussian} and \ref{eg::bernoulli} as a randomized generalization of such reward bonus.
Specifically, both the randomized reward and the reward bonus accomplishes exploration with optimism principle. For randomized reward, such exploration is guaranteed by the partial optimism that we investigated in \S\ref{sec::partial_opt}. In contrast, for reward bonus, such exploration is guaranteed by directly enforcing optimism with bonus.

Given the connections between the reward bonus and randomized reward, one may be prompt to ask why using randomized reward? We highlight that adding bonus is, in fact, a pessimistic approach in order to reduce the worst-case regret. Such approach enforces a bonus that deviates from the true reward, aiming to introduce a bias in value estimations to achieve minimal worst-case regret. In comparison, randomized reward allows the perturbation to be centered around the true reward, and yields a milder deviation from the true reward. In addition, as shown in our work, such perturbation also has a worst-case regret guarantee at $\tilde\cO(\sqrt{K})$ order.

\newpage

\bibliography{refs_scholar}

\newpage
\appendix

\section{Proof of Main Result}
\label{sec::pf_main_result}
In this section, we present the proofs of main results in \S\ref{sec::theory_KNR}.

\subsection{Good Events and Parameters}
In what follows, we define the following good events for the analysis.
\label{sec::good_event}
\begin{definition}[Good Events]
\label{def::event}
We define the following good events,
\$
\cG_{W^k, \textrm{good}} &= \{ \|W^* - W^k\|^2_{\Lambda_k} \leq \beta_k\},\quad
\cG_{\xi^k, \textrm{good}} = \{ \|\xi^k_h\|^2_{\Lambda^k}\leq \beta_{k, \xi}, \quad \forall h \in [H]\},\\
\cG_{\xi^k, \textrm{opt}} &= \Bigl\{ V^*_1\bigl(s_1; \{r_h\}_{h\in[H]}, W^*\bigr)  - V^{\pi^k}_1\bigl(s_1; \{r^k_{h, \xi}\}_{h\in[H]}, W^k\bigr) \leq 0\Bigr\}.
\$
Correspondingly, we further define
\$
\cG_{W, \textrm{good}} = \bigcap_{k\in[K]}\cG_{W^k, \textrm{good}}, \qquad \cG_{\xi, \textrm{good}} = \bigcap_{k\in[K]}\cG_{\xi^k, \textrm{good}}.
\$
\end{definition}
In the sequel, we follow Lemma \ref{lem::concnetrate} and set $\beta_k$ as follows
\$
\beta_k = 2\lambda\cdot \|W^*\|^2_2 + 8\sigma^2\Bigl(d_{\cS}\cdot\log(5) + 2\log(k) + \log(4) + \log\bigl(\det(\Lambda_k)/\det(\Lambda_0)\bigr)\Bigr),
\$
where $\sigma$ is the noise variance that defines the transition dynamics in \eqref{eq::def_KNR}. Correspondingly, we set the parameter $\sigma_k$ in \algoname as follows,
\$
\sigma^2_{k} =  H^3\cdot \beta_k/\sigma^2,
\$
Meanwhile, we set the parameter $\beta_{k, \xi} = 2\sigma^2_k\cdot\log(KH/\delta)$ in Definition \ref{def::event}. It thus holds that $\PP(\cG_{\xi, \textrm{good}}) \geq 1 - \delta$.

\subsection{Optimality}
In the sequel, we present the optimality analysis of \algonospace, which is inspired by~\cite{russo2019worst} and~\cite{zanette2020frequentist}.

\label{sec::pf_opt}

\begin{lemma}[Probability of Optimality]
\label{lem::opt}
Under the good event $\cG_{W^k, \textrm{good}}$, it holds with probability at least $\Phi(-1)$ that
\#
V^*_1\bigl(s_1; \{r_h\}_{h\in[H]}, W^*\bigr)  - V^{\pi^k}_1\bigl(s_1; \{r^k_{h, \xi}\}_{h\in[H]}, W^k\bigr)  \leq 0.
\#
In other words, it holds that $\PP(\cG_{\xi^k, \textrm{opt}} \given \cG_{W^k, \textrm{good}}) \geq \Phi(-1)$.
\end{lemma}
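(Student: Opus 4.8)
The plan is to establish \emph{stochastic optimism} by an anti-concentration argument in the spirit of \cite{russo2019worst} and \cite{zanette2020frequentist}, adapted to the model-based KNR setting. The first move is to eliminate the maximization over policies: since $\pi^k = \textrm{Plan}(s_1, \{r^k_{h,\xi}\}_{h\in[H]}, W^k)$ is, by Assumption \ref{asu::oracle}, optimal under the perturbed reward and the fitted model, we have $V^{\pi^k}_1(s_1;\{r^k_{h,\xi}\},W^k) \geq V^{\pi^*}_1(s_1;\{r^k_{h,\xi}\},W^k)$, where $\pi^*$ is the fixed (though unknown) optimal policy for $(\{r_h\},W^*)$. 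Since $V^*_1(s_1;\{r_h\},W^*)=V^{\pi^*}_1(s_1;\{r_h\},W^*)$, it then suffices to show that, with probability at least $\Phi(-1)$ over $\{\xi^k_h\}_{h\in[H]}$, $V^{\pi^*}_1(s_1;\{r_h\},W^*) \leq V^{\pi^*}_1(s_1;\{r^k_{h,\xi}\},W^k)$, i.e. to compare two value functions of the \emph{same} policy $\pi^*$.

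Next I would split the gap into a model-mismatch part and a reward-perturbation part,
\[
V^{\pi^*}_1(s_1;\{r_h\},W^*) - V^{\pi^*}_1(s_1;\{r^k_{h,\xi}\},W^k) = \underbrace{\bigl[V^{\pi^*}_1(s_1;\{r_h\},W^*) - V^{\pi^*}_1(s_1;\{r_h\},W^k)\bigr]}_{(\mathrm{A})} + \underbrace{\bigl[V^{\pi^*}_1(s_1;\{r_h\},W^k) - V^{\pi^*}_1(s_1;\{r^k_{h,\xi}\},W^k)\bigr]}_{(\mathrm{B})}.
\]
For $(\mathrm{B})$, I would use that the value is linear in the reward and that clipping satisfies $\{x\}^{+}\geq x$, so $r^k_{h,\xi}\geq r_h + \phi^\top\xi^k_h$, yielding $(\mathrm{B}) \leq -\sum_{h=1}^H \bar\phi_h^\top\xi^k_h$, where $\bar\phi_h := \EE_{W^k,\pi^*}[\phi(s_h,a_h)]$; here the expectation passes onto $\phi$ because the $\{\xi^k_h\}$ are drawn before planning and are independent of the trajectory. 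The key observation is that $\sum_h \bar\phi_h^\top\xi^k_h$ is a centered Gaussian with variance $\sigma_k^2\sum_h\|\bar\phi_h\|_{\Lambda_k^{-1}}^2$.

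For the model-mismatch term $(\mathrm{A})$, I would invoke the value difference (simulation) lemma for the fixed policy $\pi^*$, which expresses $(\mathrm{A})$ as $\sum_h \EE_{W^k,\pi^*}$ of the one-step gap $\bigl(\EE_{s'\sim N(W^*\phi,\sigma^2 I)} - \EE_{s'\sim N(W^k\phi,\sigma^2 I)}\bigr)V^{\pi^*}_{h+1}$. Since the two kernels are Gaussians with common covariance $\sigma^2 I$ and $V^{\pi^*}_{h+1}\in[0,H]$, each one-step gap is at most $H\cdot\mathrm{TV} \leq \tfrac{H}{2\sigma}\|(W^*-W^k)\phi(s_h,a_h)\|_2$ by the Gaussian $\KL$ formula and Pinsker's inequality. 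Conditioning on $\cG_{W^k,\textrm{good}}$, i.e. $\|(W^*-W^k)\Lambda_k^{1/2}\|_2\leq\sqrt{\beta_k}$, and factoring $(W^*-W^k)\phi = \bigl[(W^*-W^k)\Lambda_k^{1/2}\bigr]\Lambda_k^{-1/2}\phi$, I obtain $(\mathrm{A}) \leq \tfrac{H\sqrt{\beta_k}}{2\sigma}\sum_h\EE_{W^k,\pi^*}\|\phi(s_h,a_h)\|_{\Lambda_k^{-1}}$.

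Finally I would close by Gaussian anti-concentration: combining the two bounds, optimism holds whenever $\sum_h \bar\phi_h^\top\xi^k_h \geq (\mathrm{A})$, and since $\sum_h\bar\phi_h^\top\xi^k_h$ is centered Gaussian, $\PP\bigl(\sum_h\bar\phi_h^\top\xi^k_h \geq s\bigr)\geq\Phi(-1)$ as soon as $s$ is at most one standard deviation, i.e. $s\leq\sigma_k\sqrt{\sum_h\|\bar\phi_h\|_{\Lambda_k^{-1}}^2}$; the calibration $\sigma_k^2 = H^3\beta_k/\sigma^2$ is chosen precisely to make one standard deviation of the perturbation dominate the accumulated model bias. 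I expect the \emph{main obstacle} to be exactly this matching step: the perturbation standard deviation is governed by the norm of the \emph{expected} feature $\|\bar\phi_h\|_{\Lambda_k^{-1}}$, whereas the bound on $(\mathrm{A})$ naturally produces the \emph{expectation of the feature norm} $\EE\|\phi(s_h,a_h)\|_{\Lambda_k^{-1}}$, and Jensen's inequality relates these two quantities in the unfavorable direction. Reconciling them is the delicate part: it likely requires keeping the one-step gap \emph{linearized} (writing $(\mathrm{A})$ as $\sum_h \nabla\bar V_{h+1}^\top (W^*-W^k)\bar\phi_h$ via a Gaussian-smoothing/Stein argument so that $\bar\phi_h$ rather than $\EE\|\phi\|$ appears), followed by a Cauchy--Schwarz over the $H$ steps, which is where the extra horizon factor inflating $\sigma_k$ enters. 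Carefully tracking the dependence on $H$, $\sigma$, and $\beta_k$ through this comparison is the crux of the proof.
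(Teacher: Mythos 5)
Your overall route is the same as the paper's: reduce to the fixed comparator $\pi^*$ via the planning oracle, decompose the gap into a transition-mismatch term and a reward-perturbation term, bound the former by the Gaussian total-variation/expected-difference estimate $H\cdot\|(W^*-W^k)\phi\|_2/\sigma \leq H\sqrt{\beta_k}\cdot\|\phi\|_{\Lambda_k^{-1}}/\sigma$ under $\cG_{W^k,\textrm{good}}$, and close with Gaussian anti-concentration at one standard deviation, with $\sigma_k^2 = H^3\beta_k/\sigma^2$ chosen so that Cauchy--Schwarz over the $H$ steps makes the standard deviation dominate the accumulated bias. The only structural difference is bookkeeping: the paper unrolls step by step and collects both contributions inside a single expectation over trajectories of $\pi^*$ under $W^k$, whereas you split $(\mathrm{A})$ and $(\mathrm{B})$ at the top level; these are equivalent.

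However, your proposal stops at the crux without resolving it, so as written it is incomplete. The obstacle you flag is real: after taking the expectation over trajectories, the perturbation contributes a centered Gaussian with standard deviation $\sigma_k\bigl(\sum_h\|\bar\phi_h\|_{\Lambda_k^{-1}}^2\bigr)^{1/2}$ where $\bar\phi_h=\EE_{W^k,\pi^*}[\phi(s_h,a_h)]$, while the bias bound scales with $\sum_h\EE_{W^k,\pi^*}\|\phi(s_h,a_h)\|_{\Lambda_k^{-1}}$, and Jensen gives $\|\bar\phi_h\|_{\Lambda_k^{-1}}\leq\EE\|\phi(s_h,a_h)\|_{\Lambda_k^{-1}}$ --- the wrong direction, and no inflation of $\sigma_k$ repairs it when $\bar\phi_h$ is small relative to $\EE\|\phi_h\|_{\Lambda_k^{-1}}$. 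You should be aware that the paper does not actually overcome this either: its proof establishes, \emph{for each fixed trajectory}, that $\sum_h\phi(s_h,a_h)^\top\xi_h^k$ exceeds the per-trajectory bias with probability $\Phi(-1)$, and then ``takes integration with respect to the trajectory.'' Since the exceedance event depends on the trajectory, one cannot interchange the quantifiers this way: the set of $\xi$ on which the \emph{averaged} quantity is nonnegative need not have probability $\Phi(-1)$. So your proposed Stein/linearization repair is addressing a gap that genuinely exists in the paper's argument; absent such a repair (or an additional assumption controlling $\EE\|\phi_h\|_{\Lambda_k^{-1}}/\|\bar\phi_h\|_{\Lambda_k^{-1}}$), neither your sketch nor the paper's proof fully establishes the lemma as stated.
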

\begin{proof}
Note that
\#\label{eq::opt_1}
&   V^{\pi^k}_1\bigl(s_1; \{r^k_{h, \xi}\}_{h\in[H]}, W^k\bigr) - V^*_1\bigl(s_1; \{r_h\}_{h\in[H]}, W^*\bigr) \notag\\
&\quad\geq   V^{\pi^*}_1\bigl(s_1; \{r^k_{h, \xi}\}_{h\in[H]}, W^k\bigr)- V^*_1\bigl(s_1; \{r_h\}_{h\in[H]}, W^*\bigr),\notag\\
&\quad \geq \phi(s_1, a_1)^\top \xi^k_1 - \EE\Bigl[V^*_2\bigl(s_2; \{r_h\}_{h\in[H]}, W^*\bigr)  \,\Big|\, s_1, a_1, W^* \Bigr] \notag\\
&\quad\qquad+ \EE\Bigl[V^{\pi^*}_2\bigl(s_2; \{r^k_{h, \xi}\}_{h\in[H]}, W^k\bigr)  \,\Big|\, s_1, a_1, W^k \Bigr],
\#
where the second inequality holds since $\{r_1 + \phi^\top \xi^k_1\}^{+}  - r_1 = \max\{\phi^\top \xi^k_1, -r_1\}\geq \phi^\top \xi^k_1$. Here we denote by $\pi^*$ the optimal policy under the model $(\{r_h\}_{h\in[H]}, W^*)$ and $a_1$ the optimal action $a_1 = \pi^*(s_1)$. It further holds that,
\#\label{eq::opt_diff}
&\EE\Bigl[V^{\pi^*}_2\bigl(s_2; \{r^k_{h, \xi}\}_{h\in[H]}, W^k\bigr)  \,\Big|\, s_1, a_1, W^k \Bigr] - \EE\Bigl[V^*_2\bigl(s_2; \{r_h\}_{h\in[H]}, W^*\bigr)  \,\Big|\, s_1, a_1, W^* \Bigr]\notag\\
&\quad= \underbrace{ \EE\Bigl[V^*_2\bigl(s_2; \{r_h\}_{h\in[H]}, W^*\bigr)  \,\Big|\, s_1, a_1, W^k \Bigr] -\EE\Bigl[V^*_2\bigl(s_2; \{r_h\}_{h\in[H]}, W^*\bigr)  \,\Big|\, s_1, a_1, W^* \Bigr]}_{\textrm{(i)}}\notag\\
&\quad\qquad +  \underbrace{\EE\Bigl[V^{\pi^*}_2\bigl(s_2; \{r^k_{h, \xi}\}_{h\in[H]}, W^k\bigr) - V^*_2\bigl(s_2; \{r_h\}_{h\in[H]}, W^*\bigr)   \,\Big|\, s_1, a_1, W^k \Bigr]}_{\textrm{(ii)}}.
\#
By Lemma \ref{lem::exp_diff} and the fact that $V^*_h\leq H$ for all $h\in[H]$, we upper bound the absolute value of term (i) as follows,
\$
|\textrm{(i)}| \leq H\cdot \|(W^k- W^*)\phi(s_1, a_1)\|_2/\sigma \leq H \cdot \|W^k- W^*\|_{\Lambda_l}\cdot \|\phi(s_1, a_1)\|_{\Lambda^{-1}_k}.
\$
Thus, under the event $\cG_{W^k, \textrm{good}})$, it further holds that
\#\label{eq::opt_bound_i}
|\textrm{(i)}| \leq  \sqrt{\beta_kH^2}/\sigma \cdot \|\phi(s_1, a_1)\|_{\Lambda^{-1}_k}.
\#
By plugging \eqref{eq::opt_bound_i} into \eqref{eq::opt_diff} and further unrolling term (ii) based on similar computation in \eqref{eq::opt_1} and \eqref{eq::opt_diff}, we conclude that
\$
&V^{\pi^k}_1\bigl(s_1; \{r^k_{h, \xi}\}_{h\in[H]}, W^k\bigr) - V^*_1\bigl(s_1; \{r_h\}_{h\in[H]}, W^*\bigr)\notag\\
&\quad \geq  \sum^H_{h = 1}\EE\bigl[ \phi(s_h, a_h)^\top \xi^k_h -  \sqrt{\beta_kH^2}/\sigma \cdot \|\phi(s_h, a_h)\|_{\Lambda^{-1}_k} \biggiven s_1, \pi^*, W^k\bigr].
\$
Note that for any given trajectory $\{(s_h, a_h)\}_{h\in[H]}$, it holds that
\$
\sum^H_{h=1} \phi(s_h, a_h)^\top \xi^k_h \sim N(0, \sigma^2_{k, H}), \quad \sigma^2_{k, H} = \sigma^2_{k}\cdot \sum^H_{h = 1}\|\phi(s_h, a_h)\|_{\Lambda^{-1}_k}^2.
\$
It then holds from the setup $\sigma^2_{k} =  H^3\cdot \beta_k/\sigma^2$ and Cauchy-Schwartz inequality that
\$
\sigma_{k, H} = \sqrt{H^3\cdot \beta_k/\sigma^2\cdot \sum^H_{h = 1}\|\phi(s_h, a_h)\|_{\Lambda^{-1}_k}^2} \geq \sqrt{\beta_k H^2}/\sigma \cdot \sum^H_{h = 1}\|\phi(s_h, a_h)\|_{\Lambda^{-1}_k}.
\$
Thus, for any given trajectory $\{(s_h, a_h)\}_{h\in[H]}$, it holds with probability at least $\Phi(-1)$ that
\$
\sum^H_{h=1} \phi(s_h, a_h)^\top \xi^k_h \geq \sigma_{k, H} \geq \sqrt{\beta_kH^2}/\sigma \cdot \|\phi(s_h, a_h)\|_{\Lambda^{-1}_k}.
\$
Hence, upon taking integration with respect to the trajectory under $s_1$, $\pi^*$, $W^k$, and the good event $\cG_{W^k, \textrm{good}}$, it holds with probability at least $\Phi(-1)$ that
\$
&V^{\pi^k}_1\bigl(s_1; \{r^k_{h, \xi}\}_{h\in[H]}, W^k\bigr) - V^*_1\bigl(s_1; \{r_h\}_{h\in[H]}, W^*\bigr)\notag\\
&  \qquad\geq  \sum^H_{h = 1}\EE\bigl[ \phi(s_h, a_h)^\top \xi^k_h -  \sqrt{\beta_kH^2}/\sigma^2 \cdot \|\phi(s_h, a_h)\|_{\Lambda^{-1}_k} \biggiven s_1, \pi^*, W^k\bigr] \geq 0.
\$
Thus, we complete the proof of Lemma \ref{lem::opt}.
\end{proof}

\begin{lemma}[Optimism Bound]
\label{lem::opt_regret}
It holds for $K > 1/\Phi(-1)$ that
\$
\EE\biggl[\sum^K_{k = 1} V^*_1\bigl(s_1; \{r_h\}_{h\in[H]}, W^*\bigr) - V^{\pi^k}_1\bigl(s_1; \{r^k_{h, \xi}\}_{h\in[H]}, W^k)  \,\bigg|\, \cG_{W, \textrm{good}}, \cG_{\xi, \textrm{good}} \biggr]= \tilde \cO(\sqrt{K}).
\$
\end{lemma}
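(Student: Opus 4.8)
The plan is to turn the per-iteration partial optimism of Lemma~\ref{lem::opt} into a bound on the \emph{cumulative} optimism gap by an anti-concentration argument, and then to control the residual fluctuation of the randomized value by an elliptical-potential estimate. Let $\cF_{k-1}$ be the $\sigma$-field generated by all interactions through iteration $k-1$, so that $W^k$, $\Lambda_k$, and the event $\cG_{W^k, \textrm{good}}$ are $\cF_{k-1}$-measurable while $\xi^k$ is drawn fresh given $\cF_{k-1}$. Abbreviate the target values by
\$
\tilde V_k := V^{\pi^k}_1\bigl(s_1; \{r^k_{h, \xi}\}_{h\in[H]}, W^k\bigr), \qquad V^* := V^*_1\bigl(s_1; \{r_h\}_{h\in[H]}, W^*\bigr), \qquad \mu_k := \EE\bigl[\tilde V_k \given \cF_{k-1}\bigr],
\$
and set $X_k = V^* - \tilde V_k$, so that the quantity to be bounded is $\EE[\sum_{k} X_k \given \cG_{W, \textrm{good}}, \cG_{\xi, \textrm{good}}]$.

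\textbf{Anti-concentration.} First I would invoke Lemma~\ref{lem::opt} conditionally: since $\cG_{W^k,\textrm{good}}$ is $\cF_{k-1}$-measurable and $V^*$ is deterministic, on $\cG_{W^k,\textrm{good}}$ the lemma yields $\PP(\tilde V_k \geq V^* \given \cF_{k-1}) \geq \Phi(-1)$. Consequently
\$
\Phi(-1)\cdot \bigl(V^* - \mu_k\bigr) \leq \EE\bigl[(\tilde V_k - \mu_k)\,\mathds{1}\{\tilde V_k \geq V^*\} \given \cF_{k-1}\bigr] \leq \EE\bigl[(\tilde V_k - \mu_k)^{+} \given \cF_{k-1}\bigr],
\$
and hence, whether or not $V^* \leq \mu_k$,
\$
\EE\bigl[X_k \given \cF_{k-1}\bigr] = V^* - \mu_k \leq \frac{1}{\Phi(-1)}\cdot \EE\bigl[(\tilde V_k - \mu_k)^{+} \given \cF_{k-1}\bigr].
\$
Summing over $k$ reduces the claim to controlling $\sum_{k}\EE[(\tilde V_k - \mu_k)^{+}]$; I would condition on $\cG_{W,\textrm{good}}$ (which is $\cF_{k-1}$-measurable and contains each $\cG_{W^k,\textrm{good}}$), so the per-iteration optimism probability $\Phi(-1)$ is preserved, and dispatch the rare complement of $\cG_{\xi,\textrm{good}}$ via the Gaussian tail of $\xi^k$, whose contribution is negligible for $\delta = 1/K$.

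\textbf{Fluctuation and summation.} It remains to bound the swing of the randomized value about its conditional mean. Viewed as a function of $\xi^k$, $\tilde V_k$ is a maximum over policies of an expectation of the clipped rewards $\{r_h + \phi(s_h,a_h)^\top \xi^k_h\}^{+}$; since $t \mapsto \{t\}^{+}$ is $1$-Lipschitz, $\xi^k \mapsto \tilde V_k$ is Lipschitz, with the maximum preserving this. Along any trajectory the perturbation aggregates to $\sum_{h} \phi(s_h,a_h)^\top \xi^k_h \sim N(0, \sigma^2_{k,H})$ with $\sigma^2_{k,H} = \sigma^2_k \sum_{h}\|\phi(s_h,a_h)\|^2_{\Lambda^{-1}_k}$, exactly as in the proof of Lemma~\ref{lem::opt}, so Gaussian concentration of Lipschitz functions gives
\$
\EE\bigl[(\tilde V_k - \mu_k)^{+} \given \cF_{k-1}\bigr] = \cO\Bigl(\sigma_k \cdot \sqrt{\textstyle\EE_{\pi^k, W^k}\bigl[\sum_{h}\|\phi(s_h,a_h)\|^2_{\Lambda^{-1}_k}\bigr]}\Bigr).
\$
I would then transfer the model-rollout expectation under $(\pi^k, W^k)$ to the realized features $\{(s^k_h, a^k_h)\}_{h\in[H]}$ up to an Azuma-bounded martingale term, and apply the elliptical potential lemma, $\sum_{k}\sum_{h}\|\phi(s^k_h, a^k_h)\|^2_{\Lambda^{-1}_k} \leq 2\log(\det\Lambda_{K+1}/\det\Lambda_1) = \tilde\cO(d_\phi)$. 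A Cauchy--Schwarz step then yields $\sum_{k}\sqrt{\sum_{h}\|\phi\|^2_{\Lambda^{-1}_k}} \leq \sqrt{K}\cdot \tilde\cO(\sqrt{d_\phi})$, and substituting $\sigma^2_k = H^3\beta_k/\sigma^2$ with $\beta_k = \tilde\cO(d_\cS + d_\phi)$ (up to factors of $H$) on $\cG_{W,\textrm{good}}$ closes the estimate at $\tilde\cO(\sqrt{K})$.

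\textbf{Main obstacle.} The hard part is the fluctuation bound, because $\pi^k$ itself depends on $\xi^k$: $\tilde V_k$ is a supremum over policies, so its deviation from $\mu_k$ is not literally a Gaussian tail. I expect to handle this through the $1$-Lipschitz clipping combined with the sub-Gaussianity of $\sum_{h}\phi(s_h,a_h)^\top \xi^k_h$ (Gaussian concentration of Lipschitz functions) rather than any exact distributional identity, and through the careful passage from the $\xi^k$-dependent rollout expectation to the observed trajectory.
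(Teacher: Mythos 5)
Your anti-concentration step is sound and is essentially the same mechanism the paper uses (the paper phrases it with an independent copy $\tilde\xi^k$ and a conditioning identity, you phrase it with the conditional mean $\mu_k$; both convert the per-round optimism probability $\Phi(-1)$ into a $1/\Phi(-1)$ inflation of a one-sided deviation). The genuine gap is in the fluctuation step. Gaussian concentration of Lipschitz functions applied to $\xi^k \mapsto \tilde V_k = \max_\pi V^\pi_1(s_1;\{r^k_{h,\xi}\}_{h\in[H]},W^k)$ gives a deviation bound governed by the Lipschitz constant of the \emph{maximum}, which is $\sup_\pi L_\pi$ with $L_\pi \asymp \sigma_k\bigl(\sum_h \EE_{\pi,W^k}[\|\phi(s_h,a_h)\|^2_{\Lambda_k^{-1}}]\bigr)^{1/2}$ — a supremum over \emph{all} policies of the model-rollout uncertainty, not the rollout uncertainty of the realized $\pi^k$ as written in your display. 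This matters: the elliptical potential lemma only controls $\sum_k\sum_h\|\phi(s^k_h,a^k_h)\|^2_{\Lambda_k^{-1}}$ along the trajectories actually executed, and $\sum_k \sup_\pi \EE_{\pi,W^k}[\sum_h\|\phi\|^2_{\Lambda_k^{-1}}]$ can fail to be $\tilde\cO(\text{polylog})$ (e.g., if some policy always reaches a region the agent never visits, the sup does not shrink). You flag this as the "main obstacle," but the proposed resolution (Lipschitz $+$ sub-Gaussianity) is exactly the argument that produces the uncontrollable sup, so it does not close the gap.

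The paper sidesteps this entirely: instead of bounding $\tilde V_k$ around its mean, it lower-bounds the reference value by the \emph{minimal}-perturbation value $V^{\underline\pi^k}_1$ over the good-event ball (a deterministic quantity given $\cF_{k-1}$), and then bounds $V^{\pi^k}_1(s_1;\{r^k_{h,\xi}\},W^k)-V^{\underline\pi^k}_1$ by inserting the true value $V^{\pi^k}_1(s_1;\{r_h\},W^*)$ of the \emph{realized} policy and unrolling both resulting differences (terms (iii) and (iv)) along $\pi^k$'s own trajectories via Lemma \ref{lem::exp_diff} and the good events. Routing everything through $\pi^k$ is what makes the sum collapse to $\sum_k C_k\,\EE_{\pi^k,W^*}[\sum_h\|\phi(s_h,a_h)\|_{\Lambda_k^{-1}}]$, to which Cauchy--Schwarz and the elliptical potential apply. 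Relatedly, your "transfer to realized features up to an Azuma term" understates the work: the rollout expectation is under the fitted model $W^k$ while the realized features are generated under $W^*$, so the transfer requires the simulation-lemma bound (Lemma \ref{lem::exp_diff}) under $\cG_{W,\textrm{good}}$ in addition to any martingale concentration; this change of measure is a bias term, not a martingale term. To repair your proof you would need to replace the Gaussian-Lipschitz fluctuation bound with a comparison that pins both sides to quantities evaluated along the realized policy $\pi^k$, which is precisely the paper's construction.
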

\begin{proof}
In the sequel, we set $\delta = 1/K$ in the good events defined in Definition \ref{def::event}. We fix an arbitrary $k\in[K]$ and upper bound the following difference,
\$
\Delta_k = V^*_1\bigl(s_1; \{r_h\}_{h\in[H]}, W^*\bigr) - V^{\pi^k}_1\bigl(s_1; \{r^k_{h, \xi}\}_{h\in[H]}, W^k).
\$
We construct a noise set $\{\tilde \xi^k_h\}_{h\in[H]}$, which is an identical and independent copy of the noise set $\{\xi^k_h\}_{h\in[H]}$. Correspondingly, we define the good events $\cG_{\tilde \xi, \textrm{good}}$ and $\cG_{\tilde \xi, \textrm{opt}}$ in Definition \ref{def::event}. We further define the optimal value function $V^{\tilde\pi^k}_1\bigl(s_1; \{r^k_{h, \tilde \xi}\}_{h\in[H]}, W^k)$ under the perturbed reward set $\{r_h + \phi^\top\tilde \xi^k_h\}_{h\in[H]}$ and the transition $W^k$. It thus follows from Lemma \ref{lem::opt} that
\#
\Delta_k &= V^*_1\bigl(s_1; \{r_h\}_{h\in[H]}, W^*\bigr) - V^{\pi^k}_1\bigl(s_1; \{r^k_{h, \xi}\}_{h\in[H]}, W^k)\notag\\
&\leq \EE\Bigl[ V^{\tilde\pi^k}_1\bigl(s_1; \{r^k_{h, \tilde \xi}\}_{h\in[H]}, W^k) \,\Big|\, \cG_{\tilde \xi, \textrm{good}}, \cG_{\tilde \xi, \textrm{opt}}\Bigr] - V^{\pi^k}_1\bigl(s_1; \{r^k_{h, \xi}\}_{h\in[H]}, W^k).
\#

We further define the value function corresponding to the minimal perturbation under the good event $\cG_{\xi, \textrm{good}}$ as follows,
\#\label{eq::opt_reg_def_underline}
\{\underline\xi^k_h\}_{h\in[H]} = \argmin_{\|\xi^k_h\|^2_{\Lambda_k} \leq \beta_{k, \xi}} \max_{\pi} V^\pi_1\bigl(s_1; \{r_h + \phi^\top\xi^k_h\}^{+}_{h\in[H]}, W^k\bigr).
\#
We define $r^k_{h, \underline \xi} =\{r_h + \phi^\top\underline\xi^k_h\}^{+}$ the corresponding perturbed reward. We further define $\underline \pi^k$ the corresponding optimal policy of the model $(\{r^k_{h, \underline \xi}\}_{h\in[H]}, W^k)$. Thus, under the good event $\cG_{\xi, \textrm{good}}$, we have
\#\label{eq::opt_reg_det}
\Delta_k \leq \EE\Bigl[ V^{\tilde\pi^k}_1\bigl(s_1; \{r^k_{h, \tilde \xi}\}_{h\in[H]}, W^k\bigr) - V^{\underline\pi^k}_1\bigl(s_1;\{r^k_{h, \underline \xi}\}_{h\in[H]}, W^k\bigr)\,\Big|\, \cG_{\tilde \xi, \textrm{good}}, \cG_{\tilde \xi, \textrm{opt}}\Bigr].
\#
Meanwhile, note that under the good event $\cG_{\tilde\xi^k, \textrm{good}}$, we have
\$
V^{\tilde\pi^k}_1\bigl(s_1; \{r^k_{h, \tilde \xi}\}_{h\in[H]}, W^k\bigr) \geq V^{\underline\pi^k}_1\bigl(s_1;\{r^k_{h, \underline \xi}\}_{h\in[H]}, W^k\bigr).
\$
In what follows, we write $V^{\tilde\pi^k}_1 = V^{\tilde\pi^k}_1(s_1; \{r^k_{h, \tilde \xi}\}_{h\in[H]}, W^k)$ and $V^{\underline\pi^k}_1 = V^{\underline\pi^k}_1(s_1;\{r^k_{h, \underline \xi}\}_{h\in[H]}, W^k)$ for notational simplicity. It holds that
\#\label{eq::opt_reg_1}
\EE_{\tilde \xi}\bigl[ V^{\tilde\pi^k}_1 - V^{\underline\pi^k}_1\,\big|\, \cG_{\tilde \xi, \textrm{good}}\bigr] &= \EE_{\tilde \xi}\bigl[ V^{\tilde\pi^k}_1- V^{\underline\pi^k}_1\,\big|\, \cG_{\tilde \xi^k, \textrm{good}}, \cG_{\tilde \xi^k, \textrm{opt}}\bigr] \cdot \PP(\cG_{\tilde \xi^k, \textrm{opt}}\given \cG_{\tilde \xi^k, \textrm{good}})\notag\\
&\qquad + \EE_{\tilde \xi}\bigl[ V^{\tilde\pi^k}_1- V^{\underline\pi^k}_1\,\big|\, \cG_{\tilde \xi^k, \textrm{good}}, \cG^c_{\tilde \xi^k, \textrm{opt}}\bigr] \cdot \PP(\cG^c_{\tilde \xi^k, \textrm{opt}}\given \cG_{\tilde \xi^k, \textrm{good}})\notag\\
&\geq \EE_{\tilde \xi}\bigl[ V^{\tilde\pi^k}_1 - V^{\underline\pi^k}_1\,\big|\, \cG_{\tilde \xi^k, \textrm{good}}, \cG_{\tilde \xi^k, \textrm{opt}}\bigr] \cdot \PP(\cG_{\tilde \xi^k, \textrm{opt}}\given \cG_{\tilde \xi^k, \textrm{good}}).
\#
Meanwhile, it follows from Lemma \ref{lem::opt} that, under $\cG_{W, \textrm{good}}$,
\#\label{eq::opt_reg_prob}
\PP(\cG_{\tilde \xi^k, \textrm{opt}}\given \cG_{\tilde \xi^k, \textrm{good}}) \geq \PP(\cG_{\tilde \xi^k, \textrm{opt}}\cap \cG_{\tilde \xi^k, \textrm{good}}) \geq 1 - \PP(\cG^c_{\tilde \xi^k, \textrm{opt}}) - \PP( \cG^c_{\tilde \xi^k, \textrm{good}}) \geq \Phi(-1) - \delta.
\#
By further plugging \eqref{eq::opt_reg_1} and \eqref{eq::opt_reg_prob} into \eqref{eq::opt_reg_det}, we obtain that
\#\label{eq::opt_reg_det_1}
\Delta_k &\leq  \bigl(\Phi(-1) - \delta\bigr)^{-1}\cdot \EE_{\tilde \xi}\bigl[ V^{\tilde\pi^k}_1- V^{\underline\pi^k}_1\,\big|\, \cG_{\tilde \xi^k, \textrm{good}}\bigr] \notag\\
&= \bigl(\Phi(-1) - \delta\bigr)^{-1}\cdot \EE_{ \xi}\bigl[ V^{\pi^k}_1- V^{\underline\pi^k}_1\,\big|\, \cG_{\xi^k, \textrm{good}}\bigr],
\#
where we use the fact that the noise set $\{\tilde \xi^k_h\}_{h\in[H]}$ is an identical and independent copy of the noise set $\{\xi^k_h\}_{h\in[H]}$.

It now remains to upper bound the difference $V^{\pi^k}_1- V^{\underline\pi^k}_1$ under the good events $\cG_{\xi, \good}$ and $\cG_{W, \good}$. Note that
\#\label{eq::opt_reg_diff_underline}
&V^{\pi^k}_1\bigl(s_1; \{r^k_{h, \xi}\}_{h\in[H]}, W^k\bigr)- V^{\underline\pi^k}_1\bigl(s_1; \{r^k_{h, \underline\xi}\}_{h\in[H]}, W^k\bigr)\notag\\
&\quad \leq V^{\pi^k}_1\bigl(s_1; \{r^k_{h, \xi}\}_{h\in[H]}, W^k\bigr)- V^{\pi^k}_1\bigl(s_1; \{r^k_{h, \underline\xi}\}_{h\in[H]}, W^k\bigr),
\#
which holds since $\underline\pi^k$ is optimal for the model $( \{r^k_{h, \underline\xi}\}_{h\in[H]}, W^k)$. Meanwhile, by adding and subtracting the value function $V^{\pi^k}_1(s_1; \{r_{h}\}_{h\in[H]}, W^*)$ of $\pi^k$ under the true model $(\{r_{h}\}_{h\in[H]}, W^*)$ in \eqref{eq::opt_reg_diff_underline}, we obtain that
\#\label{eq::opt_reg_diff_underline_1}
&V^{\pi^k}_1\bigl(s_1; \{r^k_{h, \xi}\}_{h\in[H]}, W^k\bigr)- V^{\underline\pi^k}_1\bigl(s_1; \{r^k_{h, \underline\xi}\}_{h\in[H]}, W^k\bigr)\notag\\
&\quad \leq \underbrace{V^{\pi^k}_1\bigl(s_1; \{r^k_{h, \xi}\}_{h\in[H]}, W^k\bigr)- V^{\pi^k}_1\bigl(s_1; \{r_{h}\}_{h\in[H]}, W^*\bigr)}_{\textrm{(iii)}}\\
&\quad\qquad+\underbrace{V^{\pi^k}_1\bigl(s_1; \{r_{h}\}_{h\in[H]}, W^*\bigr) - V^{\pi^k}_1\bigl(s_1; \{r^k_{h, \underline\xi}\}_{h\in[H]}, W^k\bigr)}_{\textrm{(iv)}}.\notag
\#
In the sequel, we upper bound terms (iii) and (iv) in \eqref{eq::opt_reg_diff_underline_1} under the good events $\cG_{\xi, \good}$ and $\cG_{W, \good}$, respectively.

\vspace{0.1in}
\noindent{\bf Upper bound of term (iii).} The upper bound is similar to that in the proof of Lemma \ref{lem::est_err}. Note that
\#\label{eq::opt_reg_iii}
\textrm{(iii)} &= V^{\pi^k}_1\bigl(s_1; \{r^k_{h, \xi}\}_{h\in[H]}, W^k\bigr)- V^{\pi^k}_1\bigl(s_1; \{r_{h}\}_{h\in[H]}, W^*\bigr)\notag\\
&=\max\{\phi(s_1, a_1)^\top\xi^k_1, -r_1(s_1, a_1)\} + \phi(s_1, a_1)^\top\xi^k_h + \EE\Bigl[V^{\pi^k}_2\bigl(s_2; \{r^k_{h, \xi}\}_{h\in[H]}, W^k\bigr) \,\Big|\, s_1, \pi^k, W^k\Bigr] \notag\\
&\qquad - \EE\Bigl[V^{\pi^k}_2\bigl(s_2; \{r_{h}\}_{h\in[H]}, W^*\bigr) \,\Big|\, s_1, \pi^k, W^*\Bigr]\notag\\
&\leq  \|\phi(s_1, a_1)\|_{\Lambda^{-1}_k} \cdot \|\xi^k_1\|_{\Lambda_k} + \Delta^k_{\xi, 1} \\
&\qquad + \EE\Bigl[V^{\pi^k}_2\bigl(s_2; \{r^k_{h, \xi}\}_{h\in[H]}, W^k\bigr) - V^{\pi^k}_2\bigl(s_2; \{r_{h}\}_{h\in[H]}, W^*\bigr) \,\Big|\, s_1, \pi^k, W^*\Bigr]\notag,
\#
where the inequality follows from Cauchy-Schwartz inequality and the fact that $r_1 \geq 0$. Here we define $a_1 = \pi^k(s_1)$ and
\$
 \Delta^k_{\xi, 1} = \EE\Bigl[V^{\pi^k}_2\bigl(s_2; \{r^k_{h, \xi}\}_{h\in[H]}, W^k\bigr) \,\Big|\, s_1, \pi^k, W^k\Bigr] - \EE\Bigl[V^{\pi^k}_2\bigl(s_2; \{r^k_{h, \xi}\}_{h\in[H]}, W^k\bigr) \,\Big|\, s_1, \pi^k, W^*\Bigr].
\$
Note that under the good event $\cG_{\xi, \good}$, it holds for all $(s_h, a_h)\in\cS\times\cA$ and $h\in[H]$ that
\$
r^k_{h, \xi}(s_h, a_h) &\leq r(s_h, a_h) + |\phi(s_h, a_h)^\top\xi^k_h| \leq 1 + \|\phi(s_h, a_h)\|_{\Lambda^{-1}_k}\cdot \|\xi^k_h\|_{\Lambda_k}\\
&\leq 1 + \sqrt{\beta_{k, \xi}/(\lambda^2 H)}.
\$
Thus, it holds under the good event $\cG_{\xi, \good}$ that
\#\label{eq::opt_reg_upper_bound_V}
0 \leq V^{\pi^k}_2\bigl(s_2; \{r^k_{h, \xi}\}_{h\in[H]}, W^k\bigr) \leq H\cdot \Bigl(1 + \sqrt{\beta_{k, \xi}/(\lambda^2 H)}\Bigr) = H + \sqrt{\beta_{k, \xi} H}/\lambda.
\#
By Lemma \ref{lem::exp_diff}, it further holds under good events $\cG_{\xi, \good}$ and $\cG_{W, \good}$ that
\#\label{eq::opt_reg_delt_k}
\Delta^k_{\xi, 1} &= \EE\Bigl[V^{\pi^k}_2\bigl(s_2; \{r^k_{h, \xi}\}_{h\in[H]}, W^k\bigr) \,\Big|\, s_1, \pi^k, W^k\Bigr] - \EE\Bigl[V^{\pi^k}_2\bigl(s_2; \{r^k_{h, \xi}\}_{h\in[H]}, W^k\bigr) \,\Big|\, s_1, \pi^k, W^*\Bigr].\notag\\
&\leq \bigl(H + \sqrt{\beta_{k, \xi} H}/\lambda\bigr)\cdot\sigma^{-1}\cdot \|(W^k - W^*)\phi(s_1, a_1)\|_2\notag\\
&\leq \bigl(H + \sqrt{\beta_{k, \xi} H}/\lambda\bigr)\cdot\sigma^{-1}\cdot \|W^k - W^*\|_{\Lambda_k}\cdot \|\phi(s_1, a_1)\|_{\Lambda^{-1}_k} \notag\\
&\leq \bigl(H + \sqrt{\beta_{k, \xi} H}/\lambda\bigr)\cdot\sigma^{-1}\cdot \sqrt{\beta_{k}} \cdot \|\phi(s_1, a_1)\|_{\Lambda^{-1}_k}.
\#
Here the first inequality follows from Lemma \ref{lem::exp_diff} and the bounds in \eqref{eq::opt_reg_upper_bound_V}, the second inequality follows from Cauchy-Schwartz inequality, and the third inequality follows from the definition of the good event $\cG_{W, \good}$ in Definition \ref{def::event}. Plugging \eqref{eq::opt_reg_delt_k} and the definition of the good event $\cG_{\xi, \good}$ into \eqref{eq::opt_reg_iii}, we obtain that
\#\label{eq::opt_reg_iii_fin1}
\textrm{(iii)}  &\leq C_k\cdot \|\phi(s_1, a_1)\|_{\Lambda^{-1}_k}\\
& \qquad+ \EE\Bigl[V^{\pi^k}_2\bigl(s_2; \{r^k_{h, \xi}\}_{h\in[H]}, W^k\bigr) - V^{\pi^k}_2\bigl(s_2; \{r_{h}\}_{h\in[H]}, W^*\bigr) \,\Big|\, s_1, \pi^k, W^*\Bigr],\notag
\#
where we define
\$
C_k =  \bigl(H + \sqrt{\beta_{k, \xi} H}/\lambda\bigr)\cdot\sigma^{-1}\cdot \sqrt{\beta_{k}} + \sqrt{\beta_{k, \xi}}.
\$
By further unrolling \eqref{eq::opt_reg_iii_fin1}, we conclude that, under the good events $\cG_{\xi, \good}$ and $\cG_{W, \good}$, we have
\#\label{eq::opt_reg_iii_fin2}
\textrm{(iii)} \leq \EE\biggl[ \sum^H_{h = 1}C_k \cdot \|\phi(s_h, a_h)\|_{\Lambda^{-1}_k}\,\bigg|\,s_1, \pi^k, W^*, \cG_{\xi^k, \good}, \cG_{W^k, \good}\biggr].
\#
\vspace{0.1in}
\noindent{\bf Upper bound of term (iv).} The upper bound of term (iv) is similar that of term (iii). Note that
\$
\textrm{(iv)} &= V^{\pi^k}_1\bigl(s_1; \{r_{h}\}_{h\in[H]}, W^*\bigr) - V^{\pi^k}_1\bigl(s_1; \{r^k_{h, \underline\xi}\}_{h\in[H]}, W^k\bigr)\notag\\
& = \min\bigl\{-\phi(s_1, a_1)^\top \underline \xi^k_h, r_1(s_1, a_1)\bigr\} + \EE\Bigl[V^{\pi^k}_2\bigl(s_2; \{r_{h}\}_{h\in[H]}, W^*\bigr) \,\Big|\, s_1, \pi^k, W^*\Bigr]\notag\\
&\qquad - \EE\Bigl[V^{\pi^k}_2\bigl(s_2; \{r^k_{h, \underline\xi}\}_{h\in[H]}, W^k\bigr) \,\Big|\, s_1, \pi^k, W^k\Bigr]\notag\\
&\leq -\phi(s_1, a_1)^\top \underline \xi^k_h + \Delta^k_{\underline \xi, 1} \notag\\
&\qquad + \EE\Bigl[V^{\pi^k}_2\bigl(s_2; \{r_{h}\}_{h\in[H]}, W^*\bigr) - V^{\pi^k}_2\bigl(s_2; \{r^k_{h, \underline\xi}\}_{h\in[H]}, W^k\bigr) \,\Big|\, s_1, \pi^k, W^*\Bigr],
\$
where we define
\$
\Delta^k_{\underline\xi, 1} = \EE\Bigl[V^{\pi^k}_2\bigl(s_2; \{r^k_{h, \xi}\}_{h\in[H]}, W^*\bigr) \,\Big|\, s_1, \pi^k, W^k\Bigr] - \EE\Bigl[V^{\pi^k}_2\bigl(s_2; \{r^k_{h, \xi}\}_{h\in[H]}, W^k\bigr) \,\Big|\, s_1, \pi^k, W^k\Bigr].
\$
By the definition of the minimal perturbation in \eqref{eq::opt_reg_def_underline}, it holds that $\|\underline \xi^k_h\|_{\Lambda_k} \leq \beta_{\xi}$ for all $(h, k)\in[H]\times[K]$. Thus, we have
\$
-\phi(s_1, a_1)^\top \underline \xi^k_h \leq \beta_{k, \xi}\cdot \|\phi(s_1, a_1)\|_{\Lambda^{-1}_k}.
\$
The rest of the computation is almost identical to that in (iii). We omit the computation for simplicity and conclude that, under good events $\cG_{\xi, \good}$ and $\cG_{W, \good}$, we have
\#\label{eq::opt_reg_iv_fin}
\textrm{(iv)} \leq \EE\biggl[ \sum^H_{h = 1}C_k \cdot \|\phi(s_h, a_h)\|_{\Lambda^{-1}_k}\,\bigg|\,s_1, \pi^k, W^*, \cG_{\xi^k, \good}, \cG_{W^k, \good}\biggr],
\#
where we define
\#\label{eq::def_c_k}
C_k = \bigl(H + \sqrt{\beta_{k, \xi} H}/\lambda\bigr)\cdot\sigma^{-1}\cdot \sqrt{\beta_{k}} + \sqrt{\beta_{k, \xi}}.
\#

Thus, by plugging \eqref{eq::opt_reg_iii_fin2} and \eqref{eq::opt_reg_iv_fin} into \eqref{eq::opt_reg_diff_underline_1}, we conclude that
\$
\EE[\Delta_k \given \cG_{W, \textrm{good}}, \cG_{\xi, \textrm{good}}] &\leq \bigl(\Phi(-1) - \delta\bigr)^{-1}\cdot \EE_{ \xi}\bigl[ V^{\pi^k}_1- V^{\underline\pi^k}_1\,\big|\,  \cG_{W^k, \textrm{good}}, \cG_{\xi^k, \textrm{good}}\bigr]\\
&\leq \bigl(\Phi(-1) - \delta\bigr)^{-1} \cdot C_{\max}\cdot \EE\biggl[ \sum^H_{h = 1}\|\phi(s^k_h, a^k_h)\|_{\Lambda^{-1}_k}\,\bigg| \cG_{W^k, \textrm{good}}, \cG_{\xi^k, \textrm{good}}\biggr],
\$
where we define
\$
C_{\max} = C_K = \bigl(H + \sqrt{\beta_{K, \xi} H}/\lambda\bigr)\cdot\sigma^{-1}\cdot \sqrt{\beta_{K}} + \sqrt{\beta_{K, \xi}} \geq C_k, \quad\forall k \in[K],
\$
and the expectation is taken with respect to the trajectories of $\pi^k$ under the transition defined by $W^*$. Recall that we set $\lambda = 1$, $\delta = 1/K$, and $\sigma_k = H^3\cdot \beta_k/\sigma^2$. Thus, under Assumption \ref{asu::scal_param}, we obtain that
\$
\beta_K &= \cO\bigl(H + d_{\cS}+\log(K) + d_{\phi}\cdot \log(K)\bigr), \\
\beta_{K, \xi} &= 2\sigma^2_K\cdot\log(K/\delta) = \cO\bigl(H^4\cdot\log(KH) + (d_\cS + d_\phi)\cdot H^3\cdot\log^2(KH)\bigr).
\$
Thus, upon computation, we have
\$
C_{\max} = \cO\bigl((d_\cS + d_\phi)\cdot H^{3}\cdot\log^{3/2}(KH)\bigr).
\$

Finally, it holds that
\$
&\EE\biggl[\sum^K_{k = 1} V^*_1\bigl(s_1; \{r_h\}_{h\in[H]}, W^*\bigr) - V^{\pi^k}_1\bigl(s_1; \{r^k_{h, \xi}\}_{h\in[H]}, W^k)  \,\bigg|\, \cG_{W, \textrm{good}}, \cG_{\xi, \textrm{good}} \biggr]\notag\\
&\qquad \leq \bigl(\Phi(-1) - 1/K\bigr)^{-1} \cdot C_{\max}\cdot \EE\biggl[ \sum^K_{k = 1}\sum^H_{h = 1}\|\phi(s^k_h, a^k_h)\|_{\Lambda^{-1}_k}\,\bigg| \cG_{W, \textrm{good}}, \cG_{\xi, \textrm{good}}\biggr]\notag\\
&\qquad \leq \bigl(\Phi(-1) - 1/K\bigr)^{-1} \cdot C_{\max}\cdot \EE\biggl[\biggl( HK\cdot \sum^K_{k = 1}\sum^H_{h = 1}\|\phi(s^k_h, a^k_h)\|^2_{\Lambda^{-1}_k} \biggr)^{1/2}\,\bigg| \cG_{W, \textrm{good}}, \cG_{\xi, \textrm{good}}\biggr]\notag\\
&\qquad = \cO\Bigl( (d_\cS + d_\phi)^{3/2}\cdot H^{7/2}\cdot\log^{2}(KH)\cdot \sqrt{K}\Bigr),
\$
where the expectation is taken with respect to the trajectories of $\{\pi^k\}_{k\in[K]}$ under the true transition defined by $W^*$, and the last inequality follows from Lemma \ref{lem::epl} and the fact that $\log\det(\Lambda_{K+1})\leq d_\phi\cdot\log K$. Thus, we complete the proof of Lemma \ref{lem::opt_regret}.
\end{proof}

\subsection{Estimation Error}
\begin{lemma}[Estimation Error Bound]
\label{lem::est_err}
It holds for $K > 1/\Phi(-1)$ that
\$
\EE\biggl[\sum^K_{k = 1} V^{\pi^k}_1\bigl(s_1; \{r^k_{h, \xi}\}_{h\in[H]}, W^k\bigr) - V^{\pi^k}_1\bigl(s_1; \{r_{h}\}_{h\in[H]}, W^*)  \,\bigg|\, \cG_{W, \textrm{good}}, \cG_{\xi, \textrm{good}} \biggr]= \tilde \cO(\sqrt{K}).
\$
\end{lemma}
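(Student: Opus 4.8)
The plan is to recognize that the per-episode summand here is exactly the term (iii) that was bounded inside the proof of Lemma \ref{lem::opt_regret}. Writing $\Delta'_k = V^{\pi^k}_1(s_1; \{r^k_{h,\xi}\}_{h\in[H]}, W^k) - V^{\pi^k}_1(s_1; \{r_h\}_{h\in[H]}, W^*)$, this is the value gap of the \emph{fixed} policy $\pi^k$ evaluated under two models that differ both in the reward (perturbed $r^k_{h,\xi}$ versus true $r_h$) and in the transition ($W^k$ versus $W^*$). First I would unroll $\Delta'_k$ one stage at a time through the Bellman recursion: at stage $h$ the reward gap satisfies $r^k_{h,\xi} - r_h = \max\{\phi(s_h,a_h)^\top\xi^k_h, -r_h\} \le |\phi(s_h,a_h)^\top\xi^k_h| \le \|\phi(s_h,a_h)\|_{\Lambda_k^{-1}}\|\xi^k_h\|_{\Lambda_k}$ by Cauchy--Schwarz, while the transition mismatch produces a one-step correction $\Delta^k_{\xi,h}$ that measures the discrepancy of propagating $V^{\pi^k}_{h+1}(\cdot; \{r^k_{h,\xi}\}, W^k)$ through $W^k$ versus $W^*$.

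Next I would control each piece under the two good events, mirroring the treatment of term (iii). Under $\cG_{\xi,\good}$ we have $\|\xi^k_h\|_{\Lambda_k}\le\sqrt{\beta_{k,\xi}}$, bounding the reward gap by $\sqrt{\beta_{k,\xi}}\,\|\phi(s_h,a_h)\|_{\Lambda_k^{-1}}$ per stage, and the same event yields the uniform range $0 \le V^{\pi^k}_h(\cdot; \{r^k_{h,\xi}\}, W^k) \le H + \sqrt{\beta_{k,\xi}H}/\lambda$ as in \eqref{eq::opt_reg_upper_bound_V}. Plugging this range into Lemma \ref{lem::exp_diff}, then applying Cauchy--Schwarz and $\cG_{W,\good}$ (so that $\|W^k - W^*\|_{\Lambda_k}\le\sqrt{\beta_k}$), gives $\Delta^k_{\xi,h}\le (H+\sqrt{\beta_{k,\xi}H}/\lambda)\sigma^{-1}\sqrt{\beta_k}\,\|\phi(s_h,a_h)\|_{\Lambda_k^{-1}}$. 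Unrolling over $h=1,\ldots,H$ collapses both contributions into $\Delta'_k \le \EE[\sum_{h=1}^H C_k \|\phi(s_h,a_h)\|_{\Lambda_k^{-1}} \mid s_1, \pi^k, W^*, \cG_{\xi^k,\good}, \cG_{W^k,\good}]$ with $C_k$ as in \eqref{eq::def_c_k}, which is precisely the bound \eqref{eq::opt_reg_iii_fin2}; here the expectation is over trajectories of $\pi^k$ under the true transition $W^*$.

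Finally I would sum the per-$k$ bound over $k\in[K]$. Unlike Lemma \ref{lem::opt_regret}, no anti-concentration coupling is needed here, since $\Delta'_k$ is a deterministic value decomposition rather than an optimism gap; I simply bound $C_k \le C_{\max} = C_K$ and invoke Cauchy--Schwarz in the form $\sum_{k,h}\|\phi(s^k_h,a^k_h)\|_{\Lambda_k^{-1}} \le (HK\sum_{k,h}\|\phi(s^k_h,a^k_h)\|^2_{\Lambda_k^{-1}})^{1/2}$. The elliptical-potential Lemma \ref{lem::epl} together with $\log\det(\Lambda_{K+1})\le d_\phi\log K$ then controls the sum of squared feature norms up to a logarithmic factor, delivering the $\tilde\cO(\sqrt K)$ rate with the same $(d_\cS+d_\phi)^{3/2}H^{7/2}\log^2(KH)$ prefactor obtained at the end of Lemma \ref{lem::opt_regret}. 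Since the per-$k$ bounds hold under $\cG_{\xi^k,\good}$ and $\cG_{W^k,\good}$, which are implied by the global events $\cG_{\xi,\good}$ and $\cG_{W,\good}$ we condition on, the argument is valid termwise.

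I expect the main obstacle to be the bookkeeping of the transition mismatch in the recursion: because the two value functions propagate the \emph{same} policy $\pi^k$ through \emph{different} transitions $W^k$ and $W^*$, the induced state distributions diverge, so one must peel off $\Delta^k_{\xi,h}$ at every stage while keeping every intermediate value function inside the range $[0, H+\sqrt{\beta_{k,\xi}H}/\lambda]$ for Lemma \ref{lem::exp_diff} to apply with the correct scaling. The clipping $\{\cdot\}^+$ in the perturbed reward is what makes the one-sided inequality $r^k_{h,\xi}-r_h \ge \phi^\top\xi^k_h$ clean, so care is needed to invoke the correct direction of this bound, namely the upper bound via $|\phi^\top\xi^k_h|$ here, rather than the lower bound used in Lemma \ref{lem::opt}.
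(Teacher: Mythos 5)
Your proposal is correct and follows essentially the same route as the paper: the paper's proof of Lemma \ref{lem::est_err} is precisely the term-(iii) argument from Lemma \ref{lem::opt_regret} repeated verbatim --- Bellman unrolling of the reward gap via Cauchy--Schwarz under $\cG_{\xi,\good}$, the transition mismatch via Lemma \ref{lem::exp_diff} and $\cG_{W,\good}$, then $C_k \le C_{\max}$, Cauchy--Schwarz over $(k,h)$, and the elliptical potential lemma. Your observation that no anti-concentration coupling is needed here, and your use of the range bound from \eqref{eq::opt_reg_upper_bound_V} consistent with \eqref{eq::def_c_k}, match the paper's intent exactly.
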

\begin{proof}
Note that
\#\label{eq::est_err_eq1}
&V^{\pi^k}_1\bigl(s_1; \{r^k_{h, \xi}\}_{h\in[H]}, W^k\bigr)- V^{\pi^k}_1\bigl(s_1; \{r_{h}\}_{h\in[H]}, W^*\bigr)\notag\\
&\quad=\max\{\phi(s_1, a_1)^\top\xi^k_1, -r_1(s_1, a_1)\} + \phi(s_1, a_1)^\top\xi^k_h + \EE\Bigl[V^{\pi^k}_2\bigl(s_2; \{r^k_{h, \xi}\}_{h\in[H]}, W^k\bigr) \,\Big|\, s_1, \pi^k, W^k\Bigr] \notag\\
&\quad\qquad - \EE\Bigl[V^{\pi^k}_2\bigl(s_2; \{r_{h}\}_{h\in[H]}, W^*\bigr) \,\Big|\, s_1, \pi^k, W^*\Bigr]\notag\\
&\quad\leq  \|\phi(s_1, a_1)\|_{\Lambda^{-1}_k} \cdot \|\xi^k_1\|_{\Lambda_k} + \Delta^k_{\xi, 1} \\
&\quad\qquad + \EE\Bigl[V^{\pi^k}_2\bigl(s_2; \{r^k_{h, \xi}\}_{h\in[H]}, W^k\bigr) - V^{\pi^k}_2\bigl(s_2; \{r_{h}\}_{h\in[H]}, W^*\bigr) \,\Big|\, s_1, \pi^k, W^*\Bigr]\notag,
\#
where the inequality follows from Cauchy-Schwartz inequality and the fact that $r_1 \geq 0$. Here we define $a_1 = \pi^k(s_1)$ and
\$
 \Delta^k_{\xi, 1} = \EE\Bigl[V^{\pi^k}_2\bigl(s_2; \{r^k_{h, \xi}\}_{h\in[H]}, W^k\bigr) \,\Big|\, s_1, \pi^k, W^k\Bigr] - \EE\Bigl[V^{\pi^k}_2\bigl(s_2; \{r^k_{h, \xi}\}_{h\in[H]}, W^k\bigr) \,\Big|\, s_1, \pi^k, W^*\Bigr].
\$
Note that under the good event $\cG_{\xi, \good}$, it holds for all $(s_h, a_h)\in\cS\times\cA$ and $h\in[H]$ that
\$
r^k_{h, \xi}(s_h, a_h) &\leq r(s_h, a_h) + |\phi(s_h, a_h)^\top\xi^k_h| \leq 1 + \|\phi(s_h, a_h)\|_{\Lambda^{-1}_k}\cdot \|\xi^k_h\|_{\Lambda_k}\\
&\leq 1 + \beta_{\xi}/\lambda.
\$
Thus, it holds under the good event $\cG_{\xi, \good}$ that
\#\label{eq::est_err_eq2}
0 \leq V^{\pi^k}_2\bigl(s_2; \{r^k_{h, \xi}\}_{h\in[H]}, W^k\bigr) \leq H\cdot (1 + \beta_{\xi}/\lambda).
\#
By Lemma \ref{lem::exp_diff}, it further holds under good events $\cG_{\xi, \good}$ and $\cG_{W, \good}$ that
\#\label{eq::est_err_eq3}
\Delta^k_{\xi, 1} &= \EE\Bigl[V^{\pi^k}_2\bigl(s_2; \{r^k_{h, \xi}\}_{h\in[H]}, W^k\bigr) \,\Big|\, s_1, \pi^k, W^k\Bigr] - \EE\Bigl[V^{\pi^k}_2\bigl(s_2; \{r^k_{h, \xi}\}_{h\in[H]}, W^k\bigr) \,\Big|\, s_1, \pi^k, W^*\Bigr].\notag\\
&\leq H\cdot (1 + \beta_{\xi}/\lambda)/\sigma\cdot \|(W^k - W^*)\phi(s_1, a_1)\|_2\notag\\
&\leq H\cdot (1 + \beta_{\xi}/\lambda)/\sigma\cdot \|W^k - W^*\|_{\Lambda_k}\cdot \|\phi(s_1, a_1)\|_{\Lambda^{-1}_k} \notag\\
&\leq H\cdot (1 + \beta_{\xi}/\lambda)/\sigma \cdot \beta_{k} \cdot \|\phi(s_1, a_1)\|_{\Lambda^{-1}_k}.
\#
Here the first inequality follows from Lemma \ref{lem::exp_diff} and the bounds in \eqref{eq::est_err_eq2}, the second inequality follows from Cauchy-Schwartz inequality, and the third inequality follows from the definition of the good event $\cG_{W, \good}$ in Definition \ref{def::event}. Plugging \eqref{eq::est_err_eq3} and the definition of the good event $\cG_{\xi, \good}$ into \eqref{eq::est_err_eq1}, we obtain that
\$
&V^{\pi^k}_1\bigl(s_1; \{r^k_{h, \xi}\}_{h\in[H]}, W^k\bigr)- V^{\pi^k}_1\bigl(s_1; \{r_{h}\}_{h\in[H]}, W^*\bigr) \leq C_k\cdot \|\phi(s_1, a_1)\|_{\Lambda^{-1}_k}\\
& \qquad+ \EE\Bigl[V^{\pi^k}_2\bigl(s_2; \{r^k_{h, \xi}\}_{h\in[H]}, W^k\bigr) - V^{\pi^k}_2\bigl(s_2; \{r_{h}\}_{h\in[H]}, W^*\bigr) \,\Big|\, s_1, \pi^k, W^*\Bigr],\notag
\$
where $C_k$ is defined in \eqref{eq::def_c_k}. By further unrolling \eqref{eq::opt_reg_iii_fin1} and summing over $k\in[K]$, we conclude that
\$
&\sum^K_{k = 1}V^{\pi^k}_1\bigl(s_1; \{r^k_{h, \xi}\}_{h\in[H]}, W^k\bigr)- V^{\pi^k}_1\bigl(s_1; \{r_{h}\}_{h\in[H]}, W^*\bigr) \notag\\
&\qquad \leq \EE\biggl[\sum^K_{k = 1} \sum^H_{h = 1}C_k \cdot \|\phi(s^k_h, a^k_h)\|_{\Lambda^{-1}_k}\,\bigg|\, \cG_{W^k, \textrm{good}}, \cG_{\xi^k, \textrm{good}} \biggr],
\$
where the expectation is taken with respect to the trajectories of $\{\pi^k\}_{k\in[K]}$ under the true transition defined by $W^*$. The rest of the computation is identical to that of Lemma \ref{lem::opt_regret}. We omit the computation and conclude that
\$
\EE\biggl[\sum^K_{k = 1} V^{\pi^k}_1\bigl(s_1; \{r^k_{h, \xi}\}_{h\in[H]}, W^k\bigr) - V^{\pi^k}_1\bigl(s_1; \{r_{h}\}_{h\in[H]}, W^*)  \,\bigg|\, \cG_{W, \textrm{good}}, \cG_{\xi, \textrm{good}} \biggr]= \tilde \cO(\sqrt{K}).
\$

\end{proof}

\subsection{Regret Analysis}
\label{sec::pf_reg_KNR}
\begin{theorem}[Expected Regret Bound]
\label{thm::reg}
We set $\lambda = 1$ and $\sigma_k$ as in \S\ref{sec::good_event}. It holds for $K > 1/\Phi(-1)$ that
\$
\EE\bigl[R(K)\bigr] =\cO\Bigl( (d_\cS + d_\phi)^{3/2}\cdot H^{7/2}\cdot\log^{2}(KH)\cdot \sqrt{K}\Bigr).
\$
\end{theorem}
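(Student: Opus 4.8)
The plan is to reduce Theorem \ref{thm::reg} to the two results already established, namely Lemma \ref{lem::opt_regret} (the optimism bound) and Lemma \ref{lem::est_err} (the estimation error bound), through a telescoping decomposition of the per-episode regret followed by a law-of-total-expectation argument that discards the bad events. First I would decompose the regret incurred in episode $k$ by inserting the value of the executed policy $\pi^k$ evaluated under the \emph{fitted} model $W^k$ and the \emph{perturbed} reward $\{r^k_{h,\xi}\}_{h\in[H]}$:
\#
& V^*_1(s_1; \{r_h\}_{h\in[H]}, W^*) - V^{\pi^k}_1(s_1; \{r_h\}_{h\in[H]}, W^*) \notag\\
&\quad = \underbrace{V^*_1(s_1; \{r_h\}_{h\in[H]}, W^*) - V^{\pi^k}_1(s_1; \{r^k_{h,\xi}\}_{h\in[H]}, W^k)}_{\textrm{(optimism)}} \notag\\
&\qquad + \underbrace{V^{\pi^k}_1(s_1; \{r^k_{h,\xi}\}_{h\in[H]}, W^k) - V^{\pi^k}_1(s_1; \{r_h\}_{h\in[H]}, W^*)}_{\textrm{(estimation error)}}. \notag
\#
Summing over $k\in[K]$, the optimism sum is precisely the quantity controlled in Lemma \ref{lem::opt_regret}, while the estimation error sum is precisely the quantity controlled in Lemma \ref{lem::est_err}. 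Since both lemmas condition on the \emph{same} pair of good events $\cG_{W,\textrm{good}}$ and $\cG_{\xi,\textrm{good}}$, adding them by linearity of conditional expectation gives $\EE[R(K)\given \cG_{W,\textrm{good}}, \cG_{\xi,\textrm{good}}] = \tilde\cO(\sqrt K)$; tracking the constant $C_{\max}$ through the two lemmas produces the explicit rate $\cO((d_\cS+d_\phi)^{3/2} H^{7/2}\log^2(KH)\sqrt K)$.

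Next I would remove the conditioning. Writing $\cG = \cG_{W,\textrm{good}} \cap \cG_{\xi,\textrm{good}}$, the law of total expectation gives $\EE[R(K)] = \EE[R(K)\given \cG]\,\PP(\cG) + \EE[R(K)\given \cG^c]\,\PP(\cG^c)$, whose first summand is at most the conditional bound above. For the second summand I would invoke the deterministic worst-case bound $R(K)\le HK$, which holds because each term $V^*_1 - V^{\pi^k}_1 \le H$ (rewards lie in $[0,1]$ over horizon $H$ and $V^{\pi^k}_1 \ge 0$). Combined with the high-probability guarantees $\PP(\cG^c_{W,\textrm{good}})\le \delta$ and $\PP(\cG^c_{\xi,\textrm{good}})\le \delta$ fixed in \S\ref{sec::good_event} together with the choice $\delta = 1/K$, a union bound yields $\PP(\cG^c)\le 2/K$, so the bad-event contribution is at most $HK\cdot 2/K = \cO(H)$, which is of lower order than $\sqrt K$.

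The substantive work is entirely contained in Lemmas \ref{lem::opt_regret} and \ref{lem::est_err}; the remaining obstacle in assembling the theorem is purely the bookkeeping of the conditioning. Concretely, the step I would treat most carefully is verifying that both good events simultaneously hold with probability $1 - \cO(1/K)$ under the chosen $\beta_k$, $\beta_{k,\xi}$, and $\sigma_k^2 = H^3\beta_k/\sigma^2$, and confirming that the deterministic $\cO(H)$ remainder from $\cG^c$ is genuinely dominated by the $\tilde\cO(\sqrt K)$ leading term (which requires $K \gtrsim H$, consistent with the regime $K > 1/\Phi(-1)$). Adding the conditional $\tilde\cO(\sqrt K)$ bound and the $\cO(H)$ bad-event remainder then delivers the claimed $\cO((d_\cS+d_\phi)^{3/2} H^{7/2}\log^2(KH)\sqrt K)$ regret.
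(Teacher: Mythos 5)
Your proposal is correct and follows essentially the same route as the paper: the same insertion of $V^{\pi^k}_1(s_1;\{r^k_{h,\xi}\}_{h\in[H]},W^k)$ to split the per-episode regret into the optimism term (Lemma \ref{lem::opt_regret}) and the estimation-error term (Lemma \ref{lem::est_err}), followed by discarding the complements of $\cG_{W,\textrm{good}}$ and $\cG_{\xi,\textrm{good}}$ via the crude bound $V_{\max}=H$ times their probabilities. The only cosmetic difference is that the paper controls the bad event for $W$ through the summed bound $\sum_k\PP(\cG^c_{W^k,\textrm{good}})\leq 1/2$ from Lemma \ref{lem::concnetrate} rather than a per-iteration union bound, but both yield the same $\cO(H)$ lower-order remainder.
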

\begin{proof}
It holds that
\$
\EE\bigl[R(K)\bigr]  = \EE\biggl[\sum^K_{k = 1}& V^*_1\bigl(s_1; \{r_h\}_{h\in[H]}, W^*\bigr) - V^{\pi^k}_1\bigl(s_1; \{r^k_{h, \xi}\}_{h\in[H]}, W^k) \\
&\quad + V^{\pi^k}_1\bigl(s_1; \{r^k_{h, \xi}\}_{h\in[H]}, W^k\bigr) - V^{\pi^k}_1\bigl(s_1; \{r_{h}\}_{h\in[H]}, W^*) \biggr].
\$
Thus, we have
\#\label{eq::reg_bound_pf1}
\EE\bigl[R(K)\bigr]  \leq \text{Opt} + \text{Est} + V_{\max}\cdot\sum^K_{k = 1} \bigl(\PP(\cG^c_{W^k, \good})\bigr) + V_{\max}\cdot K\cdot \PP(\cG^c_{\xi, \good}),
\#
where we define
\$
\text{Opt} &= \EE\biggl[\sum^K_{k = 1} V^*_1\bigl(s_1; \{r_h\}_{h\in[H]}, W^*\bigr) - V^{\pi^k}_1\bigl(s_1; \{r^k_{h, \xi}\}_{h\in[H]}, W^k)  \,\bigg|\, \cG_{W, \textrm{good}}, \cG_{\xi, \textrm{good}} \biggr],\\
\text{Est} &= \EE\biggl[\sum^K_{k = 1} V^{\pi^k}_1\bigl(s_1; \{r^k_{h, \xi}\}_{h\in[H]}, W^k\bigr) - V^{\pi^k}_1\bigl(s_1; \{r_{h}\}_{h\in[H]}, W^*)  \,\bigg|\, \cG_{W, \textrm{good}}, \cG_{\xi, \textrm{good}} \biggr].
\$
Plugging the bounds of $\text{Opt}$ and $\text{Est}$ in Lemmas \ref{lem::opt_regret} and \ref{lem::est_err}, respectively, the fact that $V_{\max} = H$, and $\delta = 1/T$ into \eqref{eq::reg_bound_pf1}, we conclude that
\$
\EE\bigl[R(K)\bigr] =\cO\Bigl( (d_\cS + d_\phi)^{3/2}\cdot H^{7/2}\cdot\log^{2}(KH)\cdot \sqrt{K}\Bigr).
\$
Thus, we completes the proof of Theorem \ref{thm::reg}.
\end{proof}

\section{Proof of Result in \S\ref{sec::gen_approx}}
\label{sec::pf_gen_approx}
In this section, we present the proofs of results in \S\ref{sec::gen_approx}.
\subsection{Regret Analysis}
Similar to the proofs in \S\ref{sec::pf_main_result}, we define the good event $\cG_{\textrm{good}}$ as follows,
\$
 \cG_{\textrm{good}}= \bigl\{&|r_h - r^k_{h, \xi}| \leq C_r(\delta')\cdot \iota_k, \quad \|\PP(\cdot\given s, a) - \PP^k(\cdot\given s, a)\|_1 \leq \beta_k(\delta)\cdot \iota_k(s, a), \notag\\
 &\quad \forall (k, h)\in[K]\times[H], ~(s, a)\in\cS\times\cA \bigr\}.
\$
Under Assumption \ref{asu::cali_model}, it holds for the randomized reward satisfying Condition \ref{cond::opt_perturb_reward} that $\PP(\cG_{\textrm{good}}) \geq 1 - \delta' - \delta$.

\begin{lemma}[Probability of Optimality]
Under Assumptions \ref{asu::cali_model} and the good event $\cG_{\textrm{good}}$, it holds for the randomized reward satisfying Condition \ref{cond::opt_perturb_reward} that
\$
V^*_1\bigl(s_1; \{r_h\}_{h\in[H]}, \PP\bigr) \leq V^{\pi^k}_1\bigl(s_1; \{r^k_{h, \xi}\}_{h\in[H]}, \PP^k) \geq 1/2
\$
with probability at least $p_0$.
\end{lemma}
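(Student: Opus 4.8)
The plan is to transplant the optimism argument of Lemma~\ref{lem::opt} from the KNR setting into the general-model setting, replacing the explicit Gaussian computation with the structural guarantees of Condition~\ref{cond::opt_perturb_reward} and Assumption~\ref{asu::cali_model}. Write $M' = (\{r^k_{h,\xi}\}_{h\in[H]}, \PP^k)$ for the perturbed model and $M = (\{r_h\}_{h\in[H]}, \PP)$ for the true model, and let $\pi^*$ denote the optimal policy under $M$. The first step is to exploit that $\pi^k$ is optimal for $M'$, which yields $V^{\pi^k}_1(s_1; \{r^k_{h,\xi}\}_{h\in[H]}, \PP^k) \geq V^{\pi^*}_1(s_1; \{r^k_{h,\xi}\}_{h\in[H]}, \PP^k)$. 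It therefore suffices to establish the optimism inequality for the fixed policy $\pi^*$, namely $V^{\pi^*}_1(s_1; \{r^k_{h,\xi}\}_{h\in[H]}, \PP^k) \geq V^*_1(s_1; \{r_h\}_{h\in[H]}, \PP)$.

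Next I would invoke the performance-difference (simulation) identity for the single policy $\pi^*$ evaluated across the two models. Unrolling the Bellman recursion for $V^{\pi^*}_{M'} - V^{\pi^*}_M$ telescopes the value-to-go differences into a rollout under $\PP^k$, producing the decomposition
\begin{align*}
&V^{\pi^*}_1\bigl(s_1; \{r^k_{h,\xi}\}_{h\in[H]}, \PP^k\bigr) - V^*_1\bigl(s_1; \{r_h\}_{h\in[H]}, \PP\bigr) \\
&\quad = \EE_{\pi^*, \PP^k}\biggl[ \sum_{h=1}^H \bigl(r^k_{h,\xi} - r_h\bigr)(s_h, a_h) + \sum_{h=1}^H \bigl\langle (\PP^k - \PP)(\cdot\given s_h, a_h),\, V^{\pi^*}_{h+1}(\cdot; \{r_h\}_{h\in[H]}, \PP)\bigr\rangle \biggr].
\end{align*}
Here the trajectory $\{(s_h, a_h)\}_{h\in[H]}$ is generated by $\pi^*$ under the estimated transition $\PP^k$, while the value-to-go $V^{\pi^*}_{h+1}$ is that of the true model $M$.

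The third step bounds the two summands along each trajectory. For the transition term, the reward bound $r_h\in[0,1]$ forces $\|V^{\pi^*}_{h+1}(\cdot; \{r_h\}_{h\in[H]}, \PP)\|_\infty \leq H$, so Hölder's inequality combined with the calibration bound on the good event $\cG_{\textrm{good}}$ gives $|\langle (\PP^k - \PP)(\cdot\given s_h,a_h), V^{\pi^*}_{h+1}\rangle| \leq H\cdot\beta(\delta)\cdot\iota_k(s_h, a_h)$; hence the transition contribution is at least $-H\beta(\delta)\sum_{h=1}^H \iota_k(s_h, a_h)$ on every trajectory. For the reward term, Condition~\ref{cond::opt_perturb_reward} guarantees, uniformly over all trajectories with probability at least $p_0$, that $\sum_{h=1}^H (r^k_{h,\xi} - r_h)(s_h, a_h) \geq H\beta(\delta)\sum_{h=1}^H \iota_k(s_h, a_h)$. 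Adding the two bounds makes the integrand nonnegative on every realized trajectory, so the expectation, and therefore the whole gap, is nonnegative with probability at least $p_0$; combined with the optimality reduction this proves the claim.

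The main obstacle is the exact cancellation between the optimism margin and the transition error: the factor $H$ in Condition~\ref{cond::opt_perturb_reward} must precisely match the $\|V\|_\infty\leq H$ factor extracted in bounding the transition term, which is the reason the condition is stated with that particular scaling. A subtler point is that the optimism bound must hold \emph{uniformly} over all trajectories rather than merely in expectation: because the simulation identity rolls trajectories out under $\PP^k$, the pointwise reward inequality has to hold simultaneously on every realized path for the expectation to inherit nonnegativity. Finally, one must keep the two sources of randomness on the correct footing, treating the calibration event as part of the conditioning event $\cG_{\textrm{good}}$ while attributing the probability $p_0$ entirely to the reward randomization $\xi$.
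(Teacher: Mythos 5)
Your proposal is correct and follows essentially the same route as the paper: reduce to the fixed policy $\pi^*$ via optimality of $\pi^k$ under the perturbed model, telescope the value difference into a rollout under $\PP^k$ (the paper does this by unrolling terms (i) and (ii) recursively, which is exactly your simulation identity), bound the transition contribution by $H\cdot\beta(\delta)\cdot\iota_k$ via H\"older and the calibration bound, and let the uniform optimism margin of Condition~\ref{cond::opt_perturb_reward} absorb it with probability $p_0$. Your remark that the calibration failure should be charged to the conditioning event $\cG_{\textrm{good}}$ rather than deducted from $p_0$ is a cleaner piece of bookkeeping than the paper's final ``$p_0-\delta$.''
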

\begin{proof}
The proof is similar to that for Lemma \ref{lem::opt}. Note that
\#\label{eq::opt_prob_eq1}
&V^{\pi^k}_1\bigl(s_1; \{r^k_{h, \xi}\}_{h\in[H]}, \PP^k) - V^*_1\bigl(s_1; \{r_h\}_{h\in[H]}, \PP\bigr) \notag\\
&\qquad\geq V^{\pi^*}_1\bigl(s_1; \{r^k_{h, \xi}\}_{h\in[H]}, \PP^k) - V^*\bigl(s_1; \{r_h\}_{h\in[H]}, \PP\bigr)\notag\\
&\qquad= r^k_{h, \xi}\bigl(s_1, \pi^*(s_1)\bigr) - r_h\bigl(s_1, \pi^*(s_1)\bigr) +\EE\Bigl[V^{\pi^*}_2\bigl(s_2; \{r^k_{h, \xi}\}_{h\in[H]}, \PP^k)\,\Big|\, s_1, \pi^*, \PP^k  \Bigr]\notag\\
&\qquad\quad - \EE\Bigl[V^*_2\bigl(s_2; \{r_h\}_{h\in[H]}, \PP\bigr)\,\Big|\, s_1, \pi^*, \PP  \Bigr],
\#
where we denote by $\pi^*$ the optimal policy under the environment defined by $(\{r_h\}_{h\in[H]}, \PP)$, and the first inequality follows from the optimality of $\pi^k$ under the environment defined by $(\{r^k_{h, \xi}\}_{h\in[H]}, \PP^k)$. Meanwhile, it holds that
\#\label{eq::opt_prob_eq2}
&\EE\Bigl[V^{\pi^*}_2\bigl(s_2; \{r^k_{h, \xi}\}_{h\in[H]}, \PP^k)\,\Big|\, s_1, \pi^*, \PP^k  \Bigr] - \EE\Bigl[V^*_2\bigl(s_2; \{r_h\}_{h\in[H]}, \PP\bigr)\,\Big|\, s_1, \pi^*, \PP  \Bigr]\notag\\
&\qquad = \underbrace{\EE\Bigl[V^*_2\bigl(s_2; \{r_h\}_{h\in[H]}, \PP\bigr)\,\Big|\, s_1, \pi^*, \PP^k  \Bigr] - \EE\Bigl[V^*_2\bigl(s_2; \{r_h\}_{h\in[H]}, \PP\bigr)\,\Big|\, s_1, \pi^*, \PP  \Bigr]}_{\textrm{(i)}}\notag\\
&\qquad\quad + \underbrace{\EE\Bigl[V^{\pi^*}_2\bigl(s_2; \{r^k_{h, \xi}\}_{h\in[H]}, \PP^k) - V^*_2\bigl(s_2; \{r_h\}_{h\in[H]}, \PP\bigr)\,\Big|\, s_1, \pi^*, \PP^k  \Bigr]}_{\textrm{(ii)}}.
\#
By Assumption \ref{asu::cali_model}, H\"older's inequality, and the fact that $V^*_1\leq H$, it holds under the good event $\cG_{\textrm{good}}$ that
\$
|\textrm{(i)}| \leq \EE\bigl[ H\cdot\|(\PP - \PP^k)(\cdot\given s_1, a_1)\|_1 \biggiven s_1, \pi^*\bigr] \leq \EE\bigl[ H\cdot\beta(\delta)\cdot\iota_k(s_1, a_1) \biggiven s_1, \pi^*\bigr].
\$
By further unrolling term (ii) in \eqref{eq::opt_prob_eq2}, we conclude from \eqref{eq::opt_prob_eq1} that
\$
&V^{\pi^k}_1\bigl(s_1; \{r^k_{h, \xi}\}_{h\in[H]}, \PP^k) - V^*_1\bigl(s_1; \{r_h\}_{h\in[H]}, \PP\bigr)\notag\\
&\qquad\geq \EE\biggl[\sum^H_{h = 1} r^k_{h, \xi}(s_1, a_1) - r_h(s_1, a_1) - H\cdot\beta_k\sum^H_{h = 1}\iota_k(s_h, a_h)\,\Bigg|\, s_1, \pi^*, \PP^k \biggr].
\$
Thus, under the optimism condition in Condition \ref{cond::opt_perturb_reward}, it holds that
\$
V^{\pi^k}_1\bigl(s_1; \{r^k_{h, \xi}\}_{h\in[H]}, \PP^k) - V^*_1\bigl(s_1; \{r_h\}_{h\in[H]}, \PP\bigr) \geq 0
\$
with probability at least $p_0 - \delta$.
\end{proof}

\begin{lemma}[Optimism Bound]
\label{lem::opt_bound}
For $\delta' \leq p_0$, it holds with probability at least $1 - \delta - \delta'$ that
\$
\EE\biggl[\sum^K_{k = 1}V^*_1\bigl(s_1; \{r_h\}_{h\in[H]}, \PP\bigr) - V^{\pi^k}_1\bigl(s_1; \{r^k_{h, \xi}\}_{h\in[H]}, \PP^k) \,\bigg|\, \cG_{\textrm{good}}\biggr] =\cO\bigl(C(\delta, \delta')\cdot I_K\cdot \sqrt{K} \bigr),
\$
where we define $C(\delta, \delta')= C_r(\delta') + H\cdot(1 + C_r(\delta'))\cdot\beta(\delta)$.
\end{lemma}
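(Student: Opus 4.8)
The plan is to follow the structure of the proof of Lemma \ref{lem::opt_regret} verbatim, replacing the elliptical-potential bound of the KNR analysis with the calibrated-model bound of Assumption \ref{asu::cali_model} and the complexity measure $I_K$ from \eqref{eq::def_info_gain}. All statements are made on the good event $\cG_{\textrm{good}}$, which holds with probability at least $1-\delta-\delta'$. Fix $k\in[K]$ and abbreviate $\Delta_k = V^*_1(s_1;\{r_h\}_{h\in[H]},\PP) - V^{\pi^k}_1(s_1;\{r^k_{h,\xi}\}_{h\in[H]},\PP^k)$. First I would introduce an independent copy $\{\tilde r^k_{h,\xi}\}_{h\in[H]}$ of the randomized reward, with optimal policy $\tilde\pi^k = \textrm{Plan}(s_1,\{\tilde r^k_{h,\xi}\}_{h\in[H]},\PP^k)$. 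The preceding Probability of Optimality lemma, applied to this copy, shows that on $\cG_{\textrm{good}}$ the optimism event $\cG_{\tilde\xi,\textrm{opt}} = \{V^*_1(s_1;\{r_h\}_{h\in[H]},\PP)\le V^{\tilde\pi^k}_1(s_1;\{\tilde r^k_{h,\xi}\}_{h\in[H]},\PP^k)\}$ has probability at least $p_0-\delta$, which is bounded below by a positive constant for the parameter ranges considered ($\delta'\le p_0$).

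The next step is the decoupling identity. Define the baseline value $V^{\underline\pi^k}_1$ of the \emph{least-favorable} admissible perturbation, i.e. $\underline\pi^k$ is optimal under the reward $\{\underline r^k_h\}_{h\in[H]}$ that minimizes the optimal value over all realizations consistent with Condition \ref{cond::centered_reward}; this is the general-model analogue of \eqref{eq::opt_reg_def_underline}. On $\cG_{\textrm{good}}$ the gap $V^{\tilde\pi^k}_1 - V^{\underline\pi^k}_1$ is nonnegative, so restricting its conditional expectation to the sub-event $\cG_{\tilde\xi,\textrm{opt}}$ inflates it by at most $(p_0-\delta)^{-1}$; combined with the fact that $\{\tilde r^k_{h,\xi}\}$ is an identical independent copy of $\{r^k_{h,\xi}\}$, this yields
\$
\EE[\Delta_k\given\cG_{\textrm{good}}] \le (p_0-\delta)^{-1}\cdot\EE\bigl[V^{\pi^k}_1(s_1;\{r^k_{h,\xi}\}_{h\in[H]},\PP^k) - V^{\underline\pi^k}_1(s_1;\{\underline r^k_h\}_{h\in[H]},\PP^k)\given\cG_{\textrm{good}}\bigr].
\$
Adding and subtracting $V^{\pi^k}_1(s_1;\{r_h\}_{h\in[H]},\PP)$ and invoking the optimality of $\underline\pi^k$ for its own model, the right-hand side splits into two differences of the form $V^{\pi^k}_1(\text{reward}',\PP') - V^{\pi^k}_1(\text{reward}'',\PP'')$ taken along the \emph{same} policy $\pi^k$, with both rewards within $C_r(\delta')\iota_k$ of $\{r_h\}$ and both kernels within $\beta(\delta)\iota_k$ of $\PP$ in $\ell_1$.

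Each same-policy difference is then controlled by a value-difference (simulation) decomposition, unrolling over $h\in[H]$ exactly as in \eqref{eq::opt_prob_eq2}: the reward gap contributes at most $C_r(\delta')\cdot\iota_k(s_h,a_h)$ per step, while the transition gap is bounded via H\"older's inequality against the next-stage value. Since on $\cG_{\textrm{good}}$ the perturbed reward obeys $r^k_{h,\xi}\le 1 + C_r(\delta')$, the next-stage value is at most $H\cdot(1+C_r(\delta'))$, so the transition term contributes at most $H(1+C_r(\delta'))\cdot\beta(\delta)\cdot\iota_k(s_h,a_h)$. Summing the two contributions produces the constant $C(\delta,\delta') = C_r(\delta') + H(1+C_r(\delta'))\beta(\delta)$, so $\EE[\Delta_k\given\cG_{\textrm{good}}]\le (p_0-\delta)^{-1}C(\delta,\delta')\cdot\EE[\sum_{h=1}^H\iota_k(s^k_h,a^k_h)\given\cG_{\textrm{good}}]$. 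Summing over $k$ and applying Cauchy--Schwarz with the deterministic bound $\sum_k\sum_h\iota^2_k\le I_K$ from \eqref{eq::def_info_gain},
\$
\sum^K_{k=1}\sum^H_{h=1}\iota_k(s^k_h,a^k_h)\le\Bigl(KH\sum^K_{k=1}\sum^H_{h=1}\iota^2_k(s^k_h,a^k_h)\Bigr)^{1/2}\le\sqrt{H}\cdot I_K\cdot\sqrt{K},
\$
where the last inequality uses $\sqrt{I_K}\le I_K$ for $I_K\ge 1$. Absorbing $\sqrt H$ and the constant $(p_0-\delta)^{-1}$ into the $\cO(\cdot)$ gives the claimed $\cO(C(\delta,\delta')\cdot I_K\cdot\sqrt{K})$ bound.

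The main obstacle I anticipate is the decoupling/conditioning step of the second paragraph: the passage from the optimism sub-event $\cG_{\tilde\xi,\textrm{opt}}$ to the full event $\cG_{\textrm{good}}$ is valid only for a baseline $V^{\underline\pi^k}_1$ that genuinely lower-bounds every admissible $V^{\tilde\pi^k}_1$, and verifying that such a least-favorable perturbation is well-defined and that the independent-copy coupling exactly cancels the untilded expectation is precisely where the proof of Lemma \ref{lem::opt_regret} concentrates its effort. By comparison the value-difference estimates are routine, although one must still check the uniform bound $r^k_{h,\xi}\le 1+C_r(\delta')$ on $\cG_{\textrm{good}}$ so that the transition term scales with $C(\delta,\delta')$ rather than with the unperturbed horizon bound $H$.
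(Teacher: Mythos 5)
Your proposal is correct and follows essentially the same route as the paper: the paper's proof likewise imports the decoupling argument of Lemma \ref{lem::opt_regret} wholesale (independent copy of the randomized reward, the least-favorable admissible perturbation $\underline\pi^k$, the $(p_0-\delta)^{-1}$ inflation), then adds and subtracts $V^{\pi^k}_1(s_1;\{r_h\}_{h\in[H]},\PP)$ and bounds the two resulting same-policy differences step-by-step using Conditions \ref{cond::opt_perturb_reward}--\ref{cond::centered_reward} and Assumption \ref{asu::cali_model}, arriving at the identical constant $C(\delta,\delta')$ and the Cauchy--Schwarz step against $I_K$. The obstacle you flag at the end is real but is resolved exactly as you anticipate, by the construction in \eqref{eq::opt_reg_def_underline} carried over to the calibrated-model setting.
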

\begin{proof}
The proof is similar to that of Lemma \ref{lem::opt_regret} in \S\ref{sec::pf_opt}.
We define the following minimal perturbed value function,
\$
V^{\underline\pi^k}_1 = \argmin_{\tilde r \in \cG_{\textrm{good}}} \argmax_{\pi} V^{\pi}_1\bigl(s_1; \{\tilde r_h\}_{h\in[H]}, \PP^k\bigr).
\$
Following the same computation as in the proof of Lemma \ref{lem::opt_regret}, we obtain that
\#\label{eq::opt_bound_pf}
&\EE\biggl[\sum^K_{k = 1}V^*_1\bigl(s_1; \{r_h\}_{h\in[H]}, \PP\bigr) - V^{\pi^k}_1\bigl(s_1; \{r^k_{h, \xi}\}_{h\in[H]}, \PP^k) \,\bigg|\, \cG_{\textrm{good}}\biggr]\notag\\
&\qquad\leq (p_0 - \delta)^{-1}\cdot \underbrace{\EE\biggl[\sum^K_{k = 1}V^{\pi^k}_1\bigl(s_1; \{r^k_{h, \xi}\}_{h\in[H]}, \PP^k) - V^{\underline\pi^k}_1\bigl(s_1; \{\tilde r_h\}_{h\in[H]}, \PP^k) \,\bigg|\, \cG_{\textrm{good}}\biggr]}_{\Delta_k}.
\#
By further adding and subtracting $V^{\pi^k}(s_1; \{r_h\}, \PP)$, we obtain that
\#\label{eq::iii_iv_opt_bound}
\Delta_k &= \underbrace{\EE\biggl[\sum^K_{k = 1}V^{\pi^k}_1\bigl(s_1; \{r^k_{h, \xi}\}_{h\in[H]}, \PP^k) - V^{\pi^k}(s_1; \{r_h\}, \PP) \,\bigg|\, \cG_{\textrm{good}}\biggr]}_{\textrm{(iii)}} \notag\\
&\qquad + \underbrace{\EE\biggl[\sum^K_{k = 1} V^{\pi^k}(s_1; \{r_h\}, \PP) - V^{\underline\pi^k}_1\bigl(s_1; \{\tilde r_h\}_{h\in[H]}, \PP^k) \,\bigg|\, \cG_{\textrm{good}}\biggr]}_{\textrm{(iv)}}.
\#
Thus, upon a similar computation to that in \S\ref{sec::pf_opt}, under the good event $\cG_{\textrm{good}}$, it further holds that
\$
V^{\pi^k}_1\bigl(s_1; \{r^k_{h, \xi}\}_{h\in[H]}, \PP^k) - V^{\pi^k}(s_1; \{r_h\}, \PP) \leq \EE \biggl[C(\delta, \delta')\sum^H_{h = 1}\iota_k(s_h, a_h) \,\bigg|\, s_1, \pi^k, \PP \biggr],
\$
where we define
\$
C(\delta, \delta')= C_r(\delta') + H\cdot\bigl(1 + C_r(\delta')\bigr)\cdot\beta(\delta).
\$
Here we use the fact that $\|V^{\pi^k}_h\bigl(\cdot; \{r^k_{h, \xi}\}_{h\in[H]}, \PP^k)\|_{\infty}\leq H\cdot(1 + C_r(\delta'))$ under the good event $\cG_{\textrm{good}}$. The same bound holds for term (iv) in \eqref{eq::iii_iv_opt_bound} following the fact that $\tilde r \in \cG_{\textrm{good}}$. Thus, by plugging \eqref{eq::opt_bound_pf} into \eqref{eq::opt_bound_pf}, we conclude that
\$
&\EE\biggl[\sum^K_{k = 1}V^*_1\bigl(s_1; \{r_h\}_{h\in[H]}, \PP\bigr) - V^{\pi^k}_1\bigl(s_1; \{r^k_{h, \xi}\}_{h\in[H]}, \PP^k) \,\bigg|\, \cG_{\textrm{good}}\biggr]\notag\\
&\qquad\leq C(\delta, \delta')\cdot(p_0 - \delta)^{-1}\cdot\EE \biggl[\sum^K_{k = 1}\sum^H_{h = 1}C_k\cdot \sigma_k(s_h, a_h) \,\bigg|\, \cG_{\textrm{good}} \biggr]\\
&\qquad\leq  C(\delta, \delta')\cdot(p_0 - \delta)^{-1}\cdot\sqrt{HK}\cdot \EE \Biggl[\biggl(\sum^H_{h = 1}\iota^2_k(s_h, a_h) \biggr)^{1/2}\,\Bigg|\, \cG_{\textrm{good}} \Biggr],
\$
By further plugging into the definition of $I_K$ in \eqref{eq::def_info_gain}, we obtain that
\$
\EE\biggl[\sum^K_{k = 1}V^*_1\bigl(s_1; \{r_h\}_{h\in[H]}, \PP\bigr) - V^{\pi^k}_1\bigl(s_1; \{r^k_{h, \xi}\}_{h\in[H]}, \PP^k) \,\bigg|\, \cG_{\textrm{good}}\biggr] =\cO\bigl(C(\delta, \delta')\cdot I_K\cdot \sqrt{K} \bigr),
\$
which concludes the proof of Lemma \ref{lem::opt_bound}.
\end{proof}

\begin{lemma}[Estimation Error Bound]
\label{lem::est_bound}
It holds that
\$
\EE\biggl[\sum^K_{k = 1} V^{\pi^k}_1\bigl(s_1; \{r^k_{h, \xi}\}_{h\in[H]}, \PP^k) - V^{\pi^k}_1\bigl(s_1; \{r_{h}\}_{h\in[H]}, \PP)\,\bigg|\, \cG_{\textrm{good}}\biggr]= \cO\bigl(C(\delta, \delta')\cdot I_K\cdot \sqrt{K} \bigr).
\$
\end{lemma}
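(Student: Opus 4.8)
The plan is to recognize that the quantity in Lemma~\ref{lem::est_bound} is precisely term (iii) from the proof of Lemma~\ref{lem::opt_bound}, now summed over $k$ and free of the reweighting by $(p_0 - \delta)^{-1}$ that the optimism argument introduces. I would therefore reuse the one-step value-difference (simulation-lemma) recursion for the \emph{fixed} policy $\pi^k$ evaluated under the two models $(\{r^k_{h, \xi}\}_{h\in[H]}, \PP^k)$ and $(\{r_h\}_{h\in[H]}, \PP)$. Concretely, for each $h\in[H]$ with $a_h = \pi^k_h(s_h)$ I would write
\[
V^{\pi^k}_h\bigl(s_h; \{r^k_{h, \xi}\}, \PP^k\bigr) - V^{\pi^k}_h\bigl(s_h; \{r_h\}, \PP\bigr) = \bigl(r^k_{h, \xi} - r_h\bigr)(s_h, a_h) + \Delta^{\PP}_{k, h} + \Delta^{\textrm{rec}}_{k, h},
\]
where $\Delta^{\PP}_{k, h}$ collects the discrepancy between taking the next-state expectation of $V^{\pi^k}_{h+1}(\cdot; \{r^k_{h, \xi}\}, \PP^k)$ under $\PP^k$ versus under $\PP$, and $\Delta^{\textrm{rec}}_{k, h}$ carries the same value difference one step later, with the expectation taken under the true transition $\PP$.

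Next, on the good event $\cG_{\textrm{good}}$ I would bound the two leading terms. The reward gap obeys $|r^k_{h, \xi} - r_h| \leq C_r(\delta') \cdot \iota_k$ by Condition~\ref{cond::centered_reward}. For $\Delta^{\PP}_{k, h}$ I would first observe that, on $\cG_{\textrm{good}}$, the perturbed value function is uniformly bounded, $\|V^{\pi^k}_{h+1}(\cdot; \{r^k_{h, \xi}\}, \PP^k)\|_{\infty} \leq H \cdot (1 + C_r(\delta'))$, and then apply H\"older's inequality together with the calibrated-model bound $\|(\PP - \PP^k)(\cdot\given s_h, a_h)\|_1 \leq \beta(\delta)\cdot\iota_k(s_h, a_h)$ from Assumption~\ref{asu::cali_model}. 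This produces a per-step bound of order $C(\delta, \delta')\cdot\iota_k(s_h, a_h)$ with $C(\delta, \delta') = C_r(\delta') + H\cdot(1 + C_r(\delta'))\cdot\beta(\delta)$, exactly the constant appearing in Lemma~\ref{lem::opt_bound}.

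Finally, unrolling $\Delta^{\textrm{rec}}_{k, h}$ over $h = 1, \ldots, H$ under the true transition $\PP$ gives
\[
V^{\pi^k}_1\bigl(s_1; \{r^k_{h, \xi}\}, \PP^k\bigr) - V^{\pi^k}_1\bigl(s_1; \{r_h\}, \PP\bigr) \leq \EE\biggl[C(\delta, \delta')\sum^H_{h = 1}\iota_k(s_h, a_h)\,\bigg|\, s_1, \pi^k, \PP\biggr].
\]
Summing over $k\in[K]$, applying Cauchy--Schwarz to pass from $\sum_{k, h}\iota_k$ to $\sqrt{HK}\cdot(\sum_{k, h}\iota^2_k)^{1/2}$ exactly as in Lemma~\ref{lem::opt_bound}, and invoking the definition of $I_K$ in \eqref{eq::def_info_gain} yields the claimed $\cO\bigl(C(\delta, \delta')\cdot I_K\cdot\sqrt{K}\bigr)$ rate.

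I expect no genuine obstacle here, since term (iii) was already dispatched in Lemma~\ref{lem::opt_bound}; the only content specific to this lemma is confirming that the uniform value bound $H\cdot(1 + C_r(\delta'))$ holds on $\cG_{\textrm{good}}$ and that the recursion closes cleanly under $\PP$. The one point demanding care is purely bookkeeping: tracking which model governs each conditional expectation so that the added-and-subtracted transition terms telescope correctly and no spurious dependence on $\PP^k$ survives in the horizon-wise sum.
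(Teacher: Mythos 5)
Your proposal is correct and follows essentially the same route as the paper: the same one-step simulation-lemma decomposition into a reward gap bounded by $C_r(\delta')\cdot\iota_k$ via Condition~\ref{cond::centered_reward}, a transition term bounded by $H\cdot(1+C_r(\delta'))\cdot\beta(\delta)\cdot\iota_k$ via H\"older and Assumption~\ref{asu::cali_model}, followed by unrolling under $\PP$, summing over $k$, and invoking $I_K$. The constant $C(\delta,\delta')$ and the final Cauchy--Schwarz step match the paper's proof exactly.
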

\begin{proof}
Under the good event $\cG_{\textrm{good}}$, it holds that
\#\label{eq::pf_est_rollout}
&V^{\pi^k}_1\bigl(s_1; \{r^k_{h, \xi}\}_{h\in[H]}, \PP^k) - V^{\pi^k}_1\bigl(s_1; \{r_{h}\}_{h\in[H]}, \PP)\notag\\
&\qquad \leq C_r\cdot \sigma^k(s_1, a_1) +  \EE\Bigl[   V^{\pi^k}_2\bigl(s_2; \{r^k_{h, \xi}\}_{h\in[H]}, \PP^k) \,\Big|\, s_1, \pi^k, \PP^k\Bigr]\notag\\
&\qquad\quad - \EE\bigl[V^{\pi^k}_2\bigl(s_2; \{r_{h}\}_{h\in[H]}, \PP) \biggiven s_1, \pi^k, \PP\bigr]\notag\\
&\qquad \leq C_r\cdot \sigma^k(s_1, a_1) + \Delta^k_{\xi, 1}\notag\\
&\qquad\quad + \EE\bigl[  V^{\pi^k}_2\bigl(s_2; \{r^k_{h, \xi}\}_{h\in[H]}, \PP^k) - V^{\pi^k}_2\bigl(s_2; \{r_{h}\}_{h\in[H]}, \PP) \biggiven s_1, \pi^k, \PP\bigr],
\#
where we define $a_1 = \pi^k_1(s_1)$ and
\$
\Delta^k_{\xi, 1} = \EE\Bigl[   V^{\pi^k}_2\bigl(s_2; \{r^k_{h, \xi}\}_{h\in[H]}, \PP^k) \,\Big|\, s_1, \pi^k, \PP^k\Bigr] - \EE\Bigl[   V^{\pi^k}_2\bigl(s_2; \{r^k_{h, \xi}\}_{h\in[H]}, \PP^k) \,\Big|\, s_1, \pi^k, \PP\Bigr].
\$
Under the good event $\cG_{\textrm{good}}$, it holds from H\"older's inequality that
\#\label{eq::pf_est_delt}
\Delta^k_{\xi, 1} \leq H\cdot\bigl(1 + C_r(\delta')\bigr)\cdot \beta_k\cdot \iota_k(s_1, a_1).
\#
By plugging \eqref{eq::pf_est_delt} into \eqref{eq::pf_est_rollout} and further unrolling \eqref{eq::pf_est_rollout}, we obtain that, under the good event $\cG_{\textrm{good}}$,
\#\label{eq::pf_est_k}
&V^{\pi^k}_1\bigl(s_1; \{r^k_{h, \xi}\}_{h\in[H]}, \PP^k) - V^{\pi^k}_1\bigl(s_1; \{r_{h}\}_{h\in[H]}, \PP) \notag\\
&\qquad\leq C(\delta, \delta')\cdot \EE\biggl[  \sum^H_{h = 1}\iota_k(s_h, a_h) \biggiven s_1, \pi^k, \PP\biggr],
\#
where we define $
C(\delta, \delta')= C_r(\delta') + H\cdot(1 + C_r(\delta'))\cdot\beta(\delta)
$. Thus, by summing \eqref{eq::pf_est_k} over $k\in[K]$, we obtain that
\$
&\EE\biggl[\sum^K_{k = 1} V^{\pi^k}_1\bigl(s_1; \{r^k_{h, \xi}\}_{h\in[H]}, \PP^k) - V^{\pi^k}_1\bigl(s_1; \{r_{h}\}_{h\in[H]}, \PP)\,\bigg|\, \cG_{\textrm{good}}\biggr] \notag\\
&\qquad\leq C(\delta, \delta') \cdot \EE\biggl[\sum^K_{k = 1} \sum^H_{h = 1} \sigma_k(s^k_h, a^k_h)\,\bigg|\, \cG_{\textrm{good}}\biggr] \leq C(\delta, \delta')\cdot \sqrt{HK}\cdot I_K,
\$
where $I_K$ is defined in \eqref{eq::def_info_gain}. Thus, we complete the proof of Lemma \ref{lem::est_bound}.
\end{proof}

\begin{theorem}[Regret Bound]
\label{thm::reg_gen_approx}
Let $\delta = \delta' = 1/K$. Under Assumption \ref{asu::cali_model}, for the randomized reward that satisfies Conditions \ref{cond::opt_perturb_reward} and \ref{cond::centered_reward}, we have
\$
\EE\bigl[R(K)\bigr] = \cO\biggl(\Bigl(C_r(1/K) + H\cdot\bigl(1 + C_r(1/K)\bigr)\cdot\beta(1/K)\Bigr)\cdot \sqrt{H^2K}\cdot I_K\biggr).
\$
\end{theorem}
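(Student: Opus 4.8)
The plan is to prove Theorem \ref{thm::reg_gen_approx} by the same regret decomposition used for the KNR dynamics in Theorem \ref{thm::reg}, now supplying the two general-function-approximation bounds established in Lemma \ref{lem::opt_bound} and Lemma \ref{lem::est_bound}. First I would add and subtract the value of the executed policy $\pi^k$ evaluated under the randomized reward and the fitted transition $\PP^k$, so that the true regret splits into an optimism gap and an estimation-error gap:
\[
R(K) = \sum_{k=1}^K \underbrace{\Bigl(V^*_1(s_1; \{r_h\}_{h\in[H]}, \PP) - V^{\pi^k}_1(s_1; \{r^k_{h,\xi}\}_{h\in[H]}, \PP^k)\Bigr)}_{\textrm{optimism gap}} + \underbrace{\Bigl(V^{\pi^k}_1(s_1; \{r^k_{h,\xi}\}_{h\in[H]}, \PP^k) - V^{\pi^k}_1(s_1; \{r_h\}_{h\in[H]}, \PP)\Bigr)}_{\textrm{estimation error}}.
\]
The first sum is exactly the quantity controlled by Lemma \ref{lem::opt_bound} and the second is the quantity controlled by Lemma \ref{lem::est_bound}, both conditional on the good event $\cG_{\textrm{good}}$.

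Second, I would split the expectation of the true regret according to $\cG_{\textrm{good}}$. On $\cG_{\textrm{good}}$ the two lemmas give that each of the two sums above is $\cO\bigl(C(\delta,\delta')\cdot I_K\cdot\sqrt{HK}\bigr)$, where $C(\delta,\delta') = C_r(\delta') + H\cdot(1+C_r(\delta'))\cdot\beta(\delta)$ as defined in those lemmas. On the complement $\cG^c_{\textrm{good}}$ I would \emph{not} use the perturbed-reward value bound (which grows with $C_r$); instead I would observe that the regret $R(K)$ is measured under the \emph{true} reward $r_h\in[0,1]$, so the per-episode gap is at most $V_{\max}=H$ and hence $R(K)\le HK$ deterministically. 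Consequently the bad-event contribution to $\EE[R(K)]$ is at most $HK\cdot\PP(\cG^c_{\textrm{good}})\le HK\cdot(\delta+\delta')$, using $\PP(\cG_{\textrm{good}})\ge 1-\delta-\delta'$ from Assumption \ref{asu::cali_model} and Condition \ref{cond::centered_reward}.

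Third, I would set $\delta=\delta'=1/K$. This choice requires $\delta'\le p_0$, which holds for all sufficiently large $K$, so the factor $(p_0-\delta)^{-1}$ appearing in Lemma \ref{lem::opt_bound} stays bounded; it also makes the bad-event term $HK(\delta+\delta')=\cO(H)$, which is lower order. Combining the two leading conditional terms then yields
\[
\EE[R(K)] = \cO\Bigl(\bigl(C_r(1/K) + H\cdot(1+C_r(1/K))\cdot\beta(1/K)\bigr)\cdot\sqrt{H^2 K}\cdot I_K\Bigr),
\]
as claimed.

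The genuine content of the argument is already carried by Lemma \ref{lem::opt_bound} and Lemma \ref{lem::est_bound}, so the assembly is essentially bookkeeping; the only points demanding care are ensuring the complement $\cG^c_{\textrm{good}}$ is bounded by the true-reward value $H$ rather than by the larger perturbed-reward value $H(1+C_r)$, and verifying that the single choice $\delta=\delta'=1/K$ simultaneously suppresses the residual bad-event mass and keeps the optimism prefactor $(p_0-\delta)^{-1}$ finite. With these two checks in place, the conditional bounds dominate and the stated rate follows.
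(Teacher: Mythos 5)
Your proposal is correct and follows essentially the same route as the paper: the same decomposition into an optimism gap and an estimation-error gap (controlled by Lemma \ref{lem::opt_bound} and Lemma \ref{lem::est_bound} respectively), the same conditioning on $\cG_{\textrm{good}}$ with the bad-event mass absorbed by the choice $\delta=\delta'=1/K$. Your explicit remark that the bad-event term should be bounded via the true reward ($V_{\max}=H$) rather than the perturbed-reward value is a slightly more careful rendering of the paper's own bookkeeping, but the argument is the same.
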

\begin{proof}
Combining Lemmas \ref{lem::opt_bound} and \ref{lem::est_bound}, it holds that
\#\label{eq::thm1}
\EE\biggl[\sum^K_{k = 1}V^*_1\bigl(s_1; \{r_h\}_{h\in[H]}, \PP\bigr) - V^{\pi^k}_1\bigl(s_1; \{r_{h}\}_{h\in[H]}, \PP^k) \,\bigg|\, \cG_{\textrm{good}}\biggr] =\cO\bigl(C(\delta, \delta')\cdot I_K\cdot \sqrt{K} \bigr).
\#
Meanwhile, under Assumption \ref{asu::cali_model}, it holds for rewards that satisfies \ref{cond::centered_reward} that $\PP(\cG_{\textrm{good}}) 1 - \delta - \delta'$. Thus, for $\delta = \delta' = 1/K$, we have
\#\label{eq::thm2}
\EE\biggl[\sum^K_{k = 1}V^*_1\bigl(s_1; \{r_h\}_{h\in[H]}, \PP\bigr) - V^{\pi^k}_1\bigl(s_1; \{r_{h}\}_{h\in[H]}, \PP^k) \,\bigg|\, \cG^c_{\textrm{good}}\biggr] \leq 2K\cdot(\delta + \delta') = 4.
\#
Combining \eqref{eq::thm1} and \eqref{eq::thm2}, it holds that
\$
\EE\bigl[R(K)\bigr] &= \cO\bigl(C(1/K, 1/K)\cdot \sqrt{H^2K}\cdot I_K\bigr) \notag\\
&= \cO\biggl(\Bigl(C_r(1/K) + H\cdot\bigl(1 + C_r(1/K)\bigr)\cdot\beta(1/K)\Bigr)\cdot \sqrt{H^2K}\cdot I_K\biggr),
\$
which concludes the proof of Theorem \ref{thm::reg_gen_approx}.
\end{proof}

\subsection{Verification of Example}
\label{sec::egs_random}
In the sequel, we verify that Examples \ref{eg::gaussian} and \ref{eg::bernoulli} satisfies Conditions \ref{cond::opt_perturb_reward} and \ref{cond::centered_reward}.

\begin{example}[Gaussian Reward]
Let
\$
r^k_{h, \xi}(s_h, a_h) \sim N\bigl(r_h(s_h, a_h), H\cdot \beta(\delta)\cdot\sigma^2_k(s_h, a_h)\bigr),
\$
which are sampled independently over $(s_h, a_h)\in\cS\times\cA$. It holds that the probability of
\$
\sum^H_{h = 1}r^k_{h, \xi}(s_h, a_h) - r_h(s_h, a_h) \geq H\cdot\beta(\delta) \sum^H_{h = 1}\sigma_k(s_h, a_h)
\$
is greater than the following event,
\$
N\biggl(0, H\cdot\sum^H_{h = 1}\sigma^2_k(s_h, a_h)\biggr) \geq \sqrt{H\cdot\beta(\delta) \sum^H_{h = 1}\sigma^2_k(s_h, a_h)},
\$
which holds with probability at least $\Phi(-1)$. In addition, if $r/\sigma_k$ are lipschitz functions of $(s, a)\in\cS\times\cA$ and $\cS\times\cA$ has a covering number $\cC_{\cS\times\cA}$ under the same metric, it further holds that $|r^k_{h, \xi} - r_h| \leq \sqrt{H\cdot\beta(\delta)\cdot\log(\cC_{\cS\times\cA}\cdot HK/\delta')}\cdot\sigma_k$ with probability at least $1 - \delta'$. Thus, Conditions \ref{cond::opt_perturb_reward} and \ref{cond::centered_reward} are satisfied.
\end{example}

\begin{example}[Bernoulli Reward]
Let
\$
r^k_{h, \eps}(s_h, a_h) \sim r(s_h, a_h) + 2\sqrt{H}\cdot \beta(\delta)\cdot\sigma_k(s_h, a_h)\cdot \epsilon_h,
\$
where $\epsilon_h = 1$  with probability $1/2$ and $\epsilon_h = -1$ with probability $1/2$. For any trajectory $\tau$, it holds from the Khintchine's inequality~\citep{veraar2010khintchine} that
\$
\sum^H_{h = 1}\sqrt{H}\cdot \beta(\delta)\cdot\sigma_k(s_h, a_h)\cdot \epsilon_h /S(\tau) \geq 1/2
\$
with probability at least $p_0 = 3/16$, where we define
\$
S(\tau) = H \sum^H_{h = 1}\beta^2(\delta)\cdot\sigma^2_k(s_h, a_h).
\$
Thus, it holds that
\$
\sum^H_{h = 1}r^k_{h, \eps}(s_h, a_h) - r(s_h, a_h) &\geq 2\sum^H_{h = 1}\sqrt{H}\cdot \beta(\delta)\cdot\sigma_k(s_h, a_h)\cdot \epsilon_h\notag\\
&\geq S(\tau) \geq \sum^H_{h = 1}\beta(\delta)\cdot\sigma_k(s_h, a_h)
\$
with probability at least $p_0 = 3/16$. In addition, it holds that $|r^k_{h, \eps}(s_h, a_h) - r(s_h, a_h)| \leq 2\sqrt{H}\cdot \beta(\delta)\cdot\sigma_k(s_h, a_h)$. Thus, Conditions \ref{cond::opt_perturb_reward} and \ref{cond::centered_reward} are satisfied.
\end{example}

\section{Auxiliary Lemma}
\begin{lemma}[Concentration of Self-normalized Process~\citep{abbasi2011improved, kakade2020information}]
\label{lem::concnetrate}
It holds for
\$
\beta_k = 2\lambda\cdot \|W^*\|^2_2 + 8\sigma^2\Bigl(d_{\cS}\cdot\log(5) + 2\log(k) + \log(4) + \log\bigl(\det(\Lambda_k)/\det(\Lambda_0)\bigr)\Bigr)
\$
that
\$
\sum^\infty_{k = 0} \PP(\cG^c_{W^k, \good}) = \sum^\infty_{k = 0} \PP\bigl(\|W^k - W^*\|^2_{\Lambda_k}\geq \beta_k\bigr) \leq 1/2.
\$
\end{lemma}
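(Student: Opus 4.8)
The plan is to prove this as a self-normalized ridge-regression concentration bound adapted to the matrix-valued KNR estimator. First I would combine the closed-form solution \eqref{eq::def_opt_soln} with the transition model \eqref{eq::def_KNR}. Writing $\epsilon^\tau_h = s^\tau_{h+1} - W^*\phi(s^\tau_h, a^\tau_h)$ for the transition noise and $E_k = \sum_{h=1}^{H}\sum_{\tau=1}^{k-1}\epsilon^\tau_h\,\phi(s^\tau_h, a^\tau_h)^\top$ for the noise--feature cross term, the identity $\sum_{h,\tau} s^\tau_{h+1}\phi(s^\tau_h,a^\tau_h)^\top = W^*(\Lambda_k - \lambda I) + E_k$ gives $W^k - W^* = (E_k - \lambda W^*)\Lambda_k^{-1}$. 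Multiplying on the right by $\Lambda_k^{1/2}$ and applying the triangle inequality together with $(a+b)^2 \le 2a^2 + 2b^2$ yields $\|W^k - W^*\|_{\Lambda_k}^2 \le 2\lambda^2\|W^*\Lambda_k^{-1/2}\|_2^2 + 2\|E_k\Lambda_k^{-1/2}\|_2^2$. Since $\Lambda_k \succeq \lambda I$, the regularization term is controlled deterministically by $2\lambda\|W^*\|_2^2$, which accounts for the first summand of $\beta_k$.

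The core of the argument is the martingale term $\|E_k\Lambda_k^{-1/2}\|_2$. I would reduce this operator norm to a family of scalar self-normalized processes: since $\|E_k\Lambda_k^{-1/2}\|_2 = \sup_{v\in\mathbb{S}^{d_\cS-1}}\|v^\top E_k\|_{\Lambda_k^{-1}}$, and for a fixed unit vector $v$ the projected noise $v^\top\epsilon^\tau_h \sim N(0,\sigma^2)$ is $\sigma$-sub-Gaussian and forms a martingale difference sequence adapted to the filtration generated by the trajectories (ordered lexicographically in $(\tau,h)$), the scalar self-normalized tail bound of \cite{abbasi2011improved} applies to $v^\top E_k = \sum_{h,\tau}(v^\top\epsilon^\tau_h)\,\phi(s^\tau_h,a^\tau_h)^\top$. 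This gives, with probability at least $1-\delta$, the bound $\|v^\top E_k\|_{\Lambda_k^{-1}}^2 \le 2\sigma^2\log\bigl(\det(\Lambda_k)^{1/2}\det(\Lambda_0)^{-1/2}/\delta\bigr)$. I would then take a $1/2$-net of $\mathbb{S}^{d_\cS-1}$ of cardinality at most $5^{d_\cS}$, union bound over the net (replacing $\delta$ by $\delta/5^{d_\cS}$, which contributes the $d_\cS\log 5$ term), and pass back from the net to the full sphere through the self-bounding inequality $\|E_k\Lambda_k^{-1/2}\|_2 \le \sup_{\mathrm{net}}\|v^\top E_k\|_{\Lambda_k^{-1}} + \tfrac12\|E_k\Lambda_k^{-1/2}\|_2$. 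The resulting constant factor, combined with the variance proxy $\sigma^2$, produces the $8\sigma^2(\cdots)$ prefactor and the determinant term $\log(\det(\Lambda_k)/\det(\Lambda_0))$.

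Finally, to obtain the summability $\sum_{k\ge0}\PP(\cG^c_{W^k,\good}) \le 1/2$ rather than a single high-probability statement, I would allocate a per-round failure probability $\delta_k$ with $\sum_{k\ge1}\delta_k \le 1/2$, e.g. $\delta_k \asymp 1/k^2$; substituting $\log(1/\delta_k) = 2\log k + \log 4 + O(1)$ into the bound above reproduces exactly the $2\log(k) + \log(4)$ terms inside $\beta_k$ (the degenerate $k=0,1$ cases, where $\Lambda_k=\lambda I$ and the estimate is deterministic, are handled trivially). The main obstacle is the matrix/operator-norm extension of the classical vector self-normalized inequality: one must correctly pass from the scalar per-direction bound to the operator norm via the covering argument and absorb the net discretization error through the self-bounding step, while tracking constants so that the net radius $1/2$ (hence covering number $5^{d_\cS}$), the variance proxy, and the bias/variance split combine into the stated $\beta_k$. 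The remaining bookkeeping---the determinant term inherited from \cite{abbasi2011improved} and the choice of $\{\delta_k\}$---is routine.
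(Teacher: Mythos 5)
The paper does not prove this lemma itself---it defers entirely to the self-normalized concentration argument of \cite{abbasi2011improved} as adapted to the matrix-valued KNR estimator in \cite{kakade2020information}---and your proposal is a faithful reconstruction of exactly that argument: the bias/variance split via the closed-form ridge solution, the reduction of the operator norm to per-direction scalar self-normalized processes, the $1/2$-net of cardinality $5^{d_\cS}$ with the self-bounding absorption step, and the $\delta_k\asymp 1/k^2$ allocation that produces the $2\log(k)+\log(4)$ terms. This is correct and is essentially the same approach as the (cited) proof; the only caveat is the routine constant bookkeeping in the $8\sigma^2$ prefactor, which you already flag.
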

\begin{proof}
See~\cite{kakade2020information} for a detailed proof.
\end{proof}

\begin{lemma}[Expected Difference Under Two Gaussian~\citep{kakade2020information}]
\label{lem::exp_diff}
Let $z_1 \sim N(\mu_1, \sigma^2)$ and $z_2 \sim N(\mu_2, \sigma^2)$ be two Gaussian random variables. Let $g$ be a positive measurable function. It holds that
\$
\EE_{z_1\sim N(\mu_1, \sigma^2)} \bigl[g(z_1)\bigr] - \EE_{z_2\sim N(\mu_1, \sigma^2)} \bigl[g(z_2)\bigr] \leq \min\{\|\mu_1 - \mu_2\|_2/\sigma, 1\}\cdot \sqrt{\EE_{z_1\sim N(\mu_1, \sigma^2)} \bigl[g^2(z_1)\bigr]}.
\$
\end{lemma}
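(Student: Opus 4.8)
The plan is to reduce the two-sample difference to a single expectation under $z_1$ by a change of measure, and then bound the resulting likelihood-ratio factor in $L^2(p_1)$ in two complementary ways. Write $p_1,p_2$ for the densities of $N(\mu_1,\sigma^2 I)$ and $N(\mu_2,\sigma^2 I)$, and set $\Delta=\mu_1-\mu_2$. Since $\EE_{z_2}[g(z_2)]=\EE_{z_1}[g(z_1)\,p_2(z_1)/p_1(z_1)]$, the left-hand side equals $\EE_{z_1}[g\,(1-p_2/p_1)]$. Because $g\ge 0$, I would replace $1-p_2/p_1$ by its positive part in an upper bound, after which Cauchy--Schwarz peels off the desired factor: the difference is at most $\sqrt{\EE_{z_1}[g^2]}\cdot\|(1-p_2/p_1)^{+}\|_{L^2(p_1)}$. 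It then suffices to show that $\|(1-p_2/p_1)^{+}\|_{L^2(p_1)}\le\min\{\|\Delta\|_2/\sigma,\,1\}$.

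First, the bound by $1$ is immediate: since $p_2/p_1\ge 0$, we have $0\le(1-p_2/p_1)^{+}\le 1$ pointwise, so its $L^2(p_1)$ norm is at most $1$. This is exactly the route through which the trivial estimate $\EE_{z_1}[g]-\EE_{z_2}[g]\le\EE_{z_1}[g]\le\sqrt{\EE_{z_1}[g^2]}$ reappears, and it already yields the inequality whenever $\|\Delta\|_2/\sigma\ge 1$.

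Second, for the bound by $\|\Delta\|_2/\sigma$ I would exploit the Gaussian form of the likelihood ratio. For two Gaussians with common covariance $\sigma^2 I$ the ratio is log-affine: writing $V:=-\log(p_2/p_1)(z)=\sigma^{-2}(z-\mu_1)^{\top}\Delta+\|\Delta\|_2^2/(2\sigma^2)$, one checks that under $z\sim N(\mu_1,\sigma^2 I)$ the statistic $V$ is Gaussian with mean $\|\Delta\|_2^2/(2\sigma^2)$ and variance $\|\Delta\|_2^2/\sigma^2$. The elementary convexity inequality $1-e^{-V}\le V$ then gives $(1-p_2/p_1)^{+}\le V^{+}$, and the centered linear part $\sigma^{-2}(z-\mu_1)^{\top}\Delta$ has $L^2(p_1)$ norm exactly $\|\Delta\|_2/\sigma$. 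Combining this with the bound by $1$ produces the stated minimum.

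The crux, and the step I expect to cost the most care, is extracting the \emph{exactly} linear factor $\|\Delta\|_2/\sigma$. A direct Cauchy--Schwarz applied to $1-p_2/p_1$, without passing to the positive part, only produces the $\chi^2$-divergence of the two Gaussians, namely $\sqrt{\exp(\|\Delta\|_2^2/\sigma^2)-1}$, which is strictly larger than $\|\Delta\|_2/\sigma$ and blows up for large $\|\Delta\|_2$. It is precisely the positivity of $g$, used to restrict attention to $(1-p_2/p_1)^{+}$, together with the one-sided tangent bound $1-e^{-V}\le V$, that recovers the linear rate; the remaining bookkeeping is to confirm that the mean-shift contribution to $V$ does not inflate the constant, which is cleanest in the regime $\|\Delta\|_2/\sigma\le 1$ where the competing bound by $1$ otherwise takes over.
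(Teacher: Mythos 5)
The paper offers no proof of this lemma at all---it defers entirely to \cite{kakade2020information}---so your argument has to stand on its own. Most of it does: the change of measure, the passage to $(1-p_2/p_1)^{+}$ using $g\ge 0$, Cauchy--Schwarz, the bound of $\|(1-p_2/p_1)^{+}\|_{L^2(p_1)}$ by $1$, the formula $V=\sigma^{-2}(z-\mu_1)^{\top}\Delta+\|\Delta\|_2^2/(2\sigma^2)$, and the pointwise bound $(1-e^{-V})^{+}\le V^{+}$ are all correct. The gap is the final claim that the mean shift ``does not inflate the constant'': it does. Writing $s=\|\Delta\|_2/\sigma$, under $p_1$ you have $V=sZ+s^2/2$ with $Z\sim N(0,1)$, so
\[
\EE_{p_1}\bigl[(V^{+})^2\bigr]=s^2\,\EE\Bigl[\bigl((Z+\tfrac{s}{2})^{+}\bigr)^2\Bigr]=s^2\Bigl[\bigl(1+\tfrac{s^2}{4}\bigr)\Phi\bigl(\tfrac{s}{2}\bigr)+\tfrac{s}{2}\,\varphi\bigl(\tfrac{s}{2}\bigr)\Bigr],
\]
where $\varphi$ is the standard normal density. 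The bracketed factor equals $1/2$ at $s=0$ but is increasing and crosses $1$ near $s\approx 0.94$; at $s=1$ it is about $1.04$. Hence for $s\in(0.94,1)$ neither of your two bounds delivers the required $\min\{s,1\}=s$: the $V^{+}$ route gives roughly $1.02\,s>s$ and the trivial route gives $1>s$. As written, your argument proves the lemma only with an extra absolute constant on the $\|\Delta\|_2/\sigma$ branch, not the inequality as stated.

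The culprit is the linearization $1-e^{-V}\le V$, which is very lossy where $V$ is large (there $1-e^{-V}\approx 1\ll V$), and that loss is precisely what pushes the constant past $1$ near $s=1$. The clean repair inside your own framework is to skip the linearization and evaluate $\EE_{p_1}[((1-e^{-V})^{+})^2]$ exactly: expand $(1-e^{-V})^2=1-2e^{-V}+e^{-2V}$ on $\{V>0\}$, use $\EE_{p_1}[e^{-V}\ind\{V>0\}]=\PP_{p_2}(V>0)=\Phi(-s/2)$ and $\EE_{p_1}[e^{-2V}\ind\{V>0\}]=e^{s^2}\Phi(-3s/2)$, to get the closed form $\Phi(s/2)-2\Phi(-s/2)+e^{s^2}\Phi(-3s/2)$; one then verifies this is at most $\min\{1,s^2\}$ for all $s>0$ (it behaves like $s^2/2$ as $s\to 0$, is about $0.26$ at $s=1$, and tends to $1$ from below as $s\to\infty$, so there is ample room). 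Alternatively, note that for everything this paper does with the lemma, a version with an unspecified absolute constant would suffice, and that much your argument does establish.
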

\begin{proof}
See~\cite{kakade2020information} for a detailed proof.
\end{proof}

\begin{lemma}[Elliptical Potential Lemma~\citep{kakade2020information}]
\label{lem::epl}
Let $\|\phi^k_h\|_2 \leq 1/\sqrt{H}$ for all $(k, h)\in[K]\times[H]$. Let $\Lambda_1  = I$ and $\Lambda_{k+1} = \Lambda_k + \sum^H_{h = 1}\phi^k_h (\phi^k_h)^\top$. It holds that
\$
\sum^{K}_{k = 1}\sum^H_{h = 1} \|\phi^k_{h}\|^2_{\Lambda^{-1}_k} \leq 2\log\bigl(\det(\Lambda_{K+1})\cdot \det(\Lambda_0)^{-1}\bigr).
\$
\end{lemma}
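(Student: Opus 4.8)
The plan is to convert the double sum of self-normalized norms into a telescoping sum of log-determinant increments and then dominate each increment by an elementary scalar inequality. The one structural subtlety is that the per-episode update $\Lambda_{k+1} = \Lambda_k + \sum_{h=1}^H \phi^k_h (\phi^k_h)^\top$ is a rank-$H$ block rather than a single rank-one perturbation, so rather than invoke the rank-one matrix determinant lemma I would handle the whole block at once through a trace/eigenvalue argument, which is exactly what yields the sharp constant $2$.

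First I would fix $k$, write $\Delta_k = \sum_{h=1}^H \phi^k_h(\phi^k_h)^\top$ and $M_k = \Lambda_k^{-1/2}\Delta_k\Lambda_k^{-1/2}$, and use the factorization $\det(\Lambda_{k+1}) = \det(\Lambda_k)\det(I + M_k)$ to obtain
\$
\log\det(\Lambda_{k+1}) - \log\det(\Lambda_k) = \log\det(I + M_k) = \sum_i \log\bigl(1 + \lambda_i(M_k)\bigr),
\$
where $\lambda_i(M_k) \geq 0$ are the eigenvalues of the positive semidefinite matrix $M_k$. The key bookkeeping identity is that the trace of $M_k$ is precisely the quantity to be bounded,
\$
\tr(M_k) = \tr\bigl(\Lambda_k^{-1}\Delta_k\bigr) = \sum_{h=1}^H (\phi^k_h)^\top \Lambda_k^{-1}\phi^k_h = \sum_{h=1}^H \|\phi^k_h\|^2_{\Lambda_k^{-1}}.
\$

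Next I would show that every eigenvalue of $M_k$ lies in $[0,1]$. Because $\Lambda_1 = I$ and each update adds a positive semidefinite matrix, $\Lambda_k \succeq I$, so $\|\Lambda_k^{-1}\|_2 \leq 1$ and hence $\|\phi^k_h\|^2_{\Lambda_k^{-1}} \leq \|\phi^k_h\|^2_2 \leq 1/H$; summing over $h$ gives $\tr(M_k) \leq 1$, and since $M_k$ is positive semidefinite each $\lambda_i(M_k) \leq \tr(M_k) \leq 1$. This is exactly where the normalization $\|\phi^k_h\|_2 \leq 1/\sqrt{H}$ enters: it keeps the spectrum of the aggregated rank-$H$ update below one. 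On $[0,1]$ the elementary bound $x \leq 2\log(1+x)$ holds, since $2\log(1+x) - x$ vanishes at $x = 0$ and has nonnegative derivative $(1-x)/(1+x)$ there. Applying this eigenvalue by eigenvalue yields
\$
\sum_{h=1}^H \|\phi^k_h\|^2_{\Lambda_k^{-1}} &= \tr(M_k) = \sum_i \lambda_i(M_k) \\
&\leq 2\sum_i \log\bigl(1 + \lambda_i(M_k)\bigr) = 2\bigl(\log\det(\Lambda_{k+1}) - \log\det(\Lambda_k)\bigr).
\$

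Finally I would sum this inequality over $k = 1, \ldots, K$; the right-hand side telescopes to $2\bigl(\log\det(\Lambda_{K+1}) - \log\det(\Lambda_1)\bigr)$, which equals $2\log\bigl(\det(\Lambda_{K+1})\det(\Lambda_0)^{-1}\bigr)$ under the base normalization $\Lambda_0 = \Lambda_1 = I$, establishing the claim. I expect the main difficulty to be conceptual rather than computational: the tempting route of unrolling each episode into $H$ successive rank-one updates forces a comparison between $\|\phi^k_h\|^2_{\Lambda_k^{-1}}$ and the running-inverse norm $\|\phi^k_h\|^2_{\Lambda_{k,h-1}^{-1}}$, and bounding $\Lambda_{k,h-1} \preceq 2\Lambda_k$ loses a factor of two and only delivers $4\log(\cdots)$. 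Treating the rank-$H$ update as one block via the trace identity and the per-eigenvalue inequality is what recovers the stated constant $2$.
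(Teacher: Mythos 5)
Your proof is correct and follows essentially the same route as the paper: both treat the rank-$H$ episode update as a single block via $\Lambda_k^{-1/2}\Delta_k\Lambda_k^{-1/2}$, telescope the log-determinants, and invoke the scalar bound $x \leq 2\log(1+x)$ on $[0,1]$ enabled by the normalization $\|\phi^k_h\|_2 \leq 1/\sqrt{H}$. The only cosmetic difference is that you apply the scalar inequality eigenvalue by eigenvalue (after checking $\lambda_i(M_k) \leq \tr(M_k) \leq 1$), whereas the paper applies it once to the aggregated trace after the lower bound $\log\det(I+\Xi_k) \geq \log\bigl(1+\tr(\Xi_k)\bigr)$; both are valid and yield the identical constant.
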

\begin{proof}
Note that we have $\Lambda_k \succ \Lambda_1 = I$. It thus holds that
\$
0 \leq \sum^H_{h = 1}(\phi^k_h)^\top \Lambda^{-1}_k \phi^k_h \leq \sum^H_{h = 1}\|\phi^k_h\|^2_2 \leq 1, \quad \forall k\in[K].
\$
Meanwhile, since $x\leq 2\log(1+x)$ for $x \in [0, 1]$, we have
\#\label{eq::EPL_1}
2\log\biggl(1 + \sum^H_{h=1}(\phi^k_h)^\top \Lambda^{-1}_k \phi^k_h\biggr) \geq \sum^H_{h = 1}(\phi^k_h)^\top \Lambda^{-1}_k \phi^k_h.
\#
On the other hand, it follows from matrix determinant lemma that
\$
\log \det(\Lambda_{k+1}) - \log\det(\Lambda_k) = \log\det\biggl(I + \underbrace{\Lambda^{-1/2}_k \sum^H_{h = 1}(\phi^k_h)(\phi^k_h)^\top \Lambda^{-1/2}_k}_{\Xi_k}\biggr).
\$
Let $\{\sigma_i\}_{i\in[d]}$ be the eigenvalues of the matrix $\Xi_k$. It holds that $\sigma_i > 0$ for all $i\in[d]$ and
\$
\log\det (I + \Xi_k) = \log\Pi_{i\in[d]} (1+\sigma_i) \geq \log\biggl(1 + \sum_{i\in[d]}\sigma_i\biggr) = \log\det\bigl(1 + \tr(\Xi_k)\bigr).
\$
Thus, we have
\#\label{eq::EPL_2}
\log \det(\Lambda_{k+1}) - \log\det(\Lambda_k) &= \log\det (I + \Xi_k) \geq \log\det\bigl(1 + \tr(\Xi_k)\bigr)\notag\\
&=\log\biggl(1 + \sum^H_{h = 1}\tr\bigl(\Lambda^{-1/2}_k(\phi^k_h)(\phi^k_h)^\top\Lambda^{-1/2}_k\bigr)\biggr)\notag\\
&= \log\biggl(1 + \sum^H_{h=1}(\phi^k_h)^\top \Lambda^{-1}_k \phi^k_h\biggr).
\#
Combining \eqref{eq::EPL_1} and \eqref{eq::EPL_2}, we conclude that
\$
\sum^{K}_{k = 1}\sum^H_{h = 1} \|\phi^k_{h}\|^2_{\Lambda^{-1}_k} \leq \sum^K_{k = 1} \log \det(\Lambda_{k+1}) - \log\det(\Lambda_k) =  2\log\bigl(\det(\Lambda_{K+1})\cdot \det(\Lambda_0)^{-1}\bigr),
\$
which concludes the proof of Lemma \ref{lem::exp_diff}.
\end{proof}

\end{document}